\documentclass[twoside]{article}

\usepackage[preprint]{aistats2026}
\usepackage{silence}
\usepackage[T1]{fontenc}
\usepackage[utf8]{inputenc}
\usepackage{lmodern}
\usepackage{microtype}
\usepackage{amsfonts,amssymb,amsthm,amsmath}
\usepackage{graphicx,booktabs}
\usepackage{tikz}
\usetikzlibrary{arrows.meta,positioning}
\usepackage{pgfplots}
\usepgfplotslibrary{groupplots}
\pgfplotsset{compat=1.18}
\usepackage{enumitem}
\usepackage{xcolor}
\usepackage[colorlinks=true,citecolor=blue,linkcolor=blue,urlcolor=blue]{hyperref}
\usepackage{comment}
\usepackage{float}
\usepackage{subcaption}
\usepackage{placeins}
\usepackage{bm}
\usepackage{graphicx}
\usepackage{booktabs}
\usepackage{caption}

\newcommand{\fh}[1]{{\color{purple}{[FH: #1]}}}

\newcommand{\R}{\mathbb{R}}

\newcommand{\MI}{\mathrm{MI}}
\newcommand{\pa}{\boldsymbol\theta}

\def\tr(#1){\mathrm{trace}(#1)}
\def\Exp(#1){\mathbb E(#1)}
\def\Exps(#1){\mathbb E\sparen(#1)}

\def\inpr#1,#2{\langle #1 , #2 \rangle}
\def\ip<#1,#2>{\langle #1,#2 \rangle}

\def\paren(#1){\left( #1 \right)}

\def\sparen(#1){\Bigl ( #1 \Bigr )}

\newtheoremstyle{compactplain}
 {3pt plus 1pt minus 1pt}
 {3pt plus 1pt minus 1pt}
 {\itshape}
 {}
 {\bfseries}
 {.}
 {0.5em}
 {}

\newtheoremstyle{compactdefinition}
 {3pt plus 1pt minus 1pt}
 {3pt plus 1pt minus 1pt}
 {\normalfont}
 {}
 {\bfseries}
 {.}
 {0.5em}
 {}

\theoremstyle{compactplain}
\newtheorem{theorem}{Theorem}[section]
\newtheorem{lemma}[theorem]{Lemma}
\newtheorem{proposition}[theorem]{Proposition}
\newtheorem{corollary}[theorem]{Corollary}

\newtheorem{assumption}{Assumption}

\theoremstyle{compactdefinition}
\newtheorem{definition}[theorem]{Definition}
\newtheorem{example}[theorem]{Example}

\numberwithin{equation}{section}
\begin{document}

\runningtitle{Beyond Global Divergences}

\twocolumn[

\aistatstitle{Beyond Global Divergences:\\
A Local-Mass Perspective on Bayesian Inference}

\aistatsauthor{Hanli Xu\And Fengxiang He\And Sarat Moka}

\aistatsaddress{UNSW Sydney\\\texttt{z5664653@zmail.unsw.edu.au} \And University of Edinburgh\\\texttt{fhe@ed.ac.uk}\And UNSW Sydney\\\texttt{s.moka@unsw.edu.au} } ]

\begin{abstract}
Global objectives, such as KL divergence and ELBO, are widely used in Bayesian inference for measuring distributional discrepancy. This paper studies their “local-mass behaviour” that is not directly captured by such objectives. We introduce and use two mathematical tools: (1) Mass Index for recording the polynomial and logarithmic decay scales of local mass, and (2) regularised extended KL (RE-KL), a set-localised divergence that can be formulated in the presence of singular components. Mass Indices help characterise how Bayesian updating changes local mass: (1) power-log likelihood factors shift it explicitly, and (2) parameter-dependent supports, or their smooth softenings, may change the local scale through the amount of mass that remains near the parameter value. Using local RE-KL, we prove absolute, relative, and directional inequalities for comparing local small-ball masses under the two KL directions. Together, these results provide a local theoretical account of local mass behaviour. Experiments provide controlled illustrations of the local behaviour. Code is available at \url{https://github.com/Forsythia0604/Local-Mass-Framework}.
\end{abstract}

\section{Introduction}

Global objectives, such as Kullback-Leibler (KL) and the evidence lower bound (ELBO), are standard in Bayesian inference \cite{kingma2013auto,li_renyi_2016,minka2005divergence,blei2017variational}. However, global closeness does not necessarily determine how probability mass behaves near a specified parameter value. Bayesian asymptotic theory has long made this local issue explicit through prior mass and small-ball conditions for posterior contraction, while sparse, singular, and constrained Bayesian models provide common settings in which the behaviour of mass near special parameter values is substantively important \cite{ghosal2000convergence, castillo2015bayesian,van2008rates}. Such questions are not fully captured by a single global divergence value. A sequence of distributions can converge in KL divergence while still exhibiting different small-ball scaling around a fixed point. 

This paper develops a local-mass framework for Bayesian inference. The basic object of study is the small-ball probability \(p(B_r(\pa))\), \(r \downarrow 0\), where \(B_r(\pa)\) denotes the ball of radius \(r\) centred at a parameter value \(\pa\). The asymptotic behaviour of this quantity records how probability mass accumulates around \(\pa\). 

We use the term \emph{Mass Index} for a concentration-oriented parametrisation of local small-ball behaviour. The object of interest is a probability measure near a fixed parameter value, and the notion is closely related to the local dimension theory of a measure \cite{falconer1997techniques,heurteaux2007dimension,mattila1999geometry}. Specifically, suppose that, as \(r\downarrow 0\), \(\mu(B_r(\pa)) \asymp r^a\). Here \(d\) is the ambient dimension, and \(a\) is the usual local scaling exponent. The power component of the Mass Index records the normalised reciprocal scale \(d/a\). Thus it reverses the usual local-dimension scale and expresses it as a concentration scale. The logarithmic component records the first log-order correction, when the power scale is well defined. This plays a role analogous to slowly varying corrections in regular variation \cite{karamata1930mode,karamata1933mode,bingham1989regular}. The notation therefore distinguishes measures with the same leading polynomial order but different logarithmic local mass behaviour, such as smooth priors and horseshoe-type priors near sparse parameter values.

To compare local masses, we use \emph{regularised extended KL} (RE-KL), a set-localised regularised \(f\)-divergence \cite{wan2020f,avlogiaris2016local,zhang_variational_2024}. Its finite recession slope permits an extension to possibly singular measures by separating absolutely continuous and singular contributions. We therefore use RE-KL as a local \(f\)-divergence-based tool for comparing small-ball masses, not as a new global divergence. In the absolutely continuous case, it relates to Tsallis-type divergences, includes squared Hellinger distance at \(\alpha=1/2\), and converges to KL as \(\alpha\uparrow1\).

We prove that Bayesian updating preserves both power and logarithmic local mass scales, if the likelihood is locally regular around the parameter value. 
Thus Bayesian updating can be decomposed into a prior local scale and a likelihood-induced local correction.
When the support is parameter-dependent, the likelihood may contain a hard support constraint that removes part of the prior mass near the parameter value. We prove that now local mass is governed by the fraction of prior mass that survives the constraint in shrinking neighbourhoods. If this surviving fraction decays at a polynomial or logarithmic rate, the posterior Mass Index shifts by the corresponding amount. We also show that smoothing hard constraints may not preserve this local structure: the zero-temperature limit of a softened likelihood and the small-neighbourhood limit need not commute.

We then study local discrepancy between two probability measures. 
RE-KL gives both absolute and relative bounds on local probability masses. 
These bounds imply directional sufficient conditions for comparing Mass Indices. The approximation-to-target direction is rigid: sufficiently small normalised local RE-KL forces equality of the local power scale, and also of the logarithmic scale when defined. The target-to-approximation direction is weaker: bounded local divergence prevents the approximation from being locally thinner than the target, but still allows extra local concentration. This expresses the usual asymmetry between KL directions directly in terms of local small-ball mass.

Numerical experiments illustrate the theory. Synthetic examples display regular mass, depletion, cusps, logarithmic corrections, and atoms. A UCI Bayesian logistic-regression experiment shows that regular Bayesian updating can greatly increase posterior mass near the Laplace mean while preserving the leading local power order. A final toy example demonstrates the directionality of local RE-KL, with one direction remaining bounded and the other diverging on shrinking neighbourhoods.

\section{Related work}

\textbf{Variational inference and mode coverage.}
Divergence choice is known to affect the behaviour of variational approximations \cite{li_renyi_2016,minka2005divergence}. Inclusive-KL and related \(f\)-divergence methods often encourage broader mass coverage than reverse-KL objectives \cite{minka2005divergence,naesseth2020markovian,mcnamara_sequential_2024,wan2020f,zhang_variational_2024}. These approaches are mainly global. Our use of local RE-KL is closer in spirit to the local analysis of Csiszár \(\phi\)-divergences by Avlogiaris et al.~\cite{avlogiaris2016local}. We study a narrower question: how Bayesian updating and variational approximation affect small-ball probabilities \(p(B_r(\pa))\) as \(r\downarrow0\). This is also related to mode-coverage questions, where global criteria may fail to capture set-level or pointwise coverage \cite{lin2018pacgan,zhong2019rethinking}.

\paragraph{Regular variation and local asymptotic scales.}
Our notation draws on two classical sources. The first is regular variation, which provides the language for separating a leading power law from slowly varying, including logarithmic, corrections \cite{karamata1930mode,karamata1933mode,bingham1989regular}. The second is the local, or pointwise, dimension theory of measures, where the small-ball behaviour of \(p(B_r(\pa))\) is summarised through limits of \(\log p(B_r(\pa))/\log r\) \cite{falconer1997techniques,mattila1999geometry,heurteaux2007dimension}. We use these ideas only as a local small-ball bookkeeping device: for a fixed parameter value \(\pa\), \(p(B_r(\pa))\) records the leading power scale and, when present, the first logarithmic correction of the probability mass.

\textbf{Bayesian pruning.}
Bayesian sparsification methods use priors to control how mass is allocated across parameter regions while avoiding excessive shrinkage of large signals \cite{louizos2017bayes,ghosh_model_nodate,fortuin_bayesian_2022}. These works motivate our local perspective, but they typically evaluate architecture size, contraction, objective value, or prediction \cite{van2008rates,polson_posterior_2018,castillo2015bayesian}. We instead isolate a measure-level object: the asymptotic scale of local small-ball mass, independent of a particular architecture or optimisation procedure.

\section{Preliminaries}

\paragraph{Basics.} For non-negative functions \(f\) and \(g\), with respect to the limit under consideration, 
\(f=O(g)\), \(f=\Omega(g)\), and \(f\asymp g\) denote upper, lower, and two-sided comparison up to positive constants, respectively.
For probability measures \(p\) and \(q\), \(p\ll q\) means that
\(q(E)=0\Rightarrow p(E)=0\) for every measurable set \(E\). We write
\(p\perp q\) if there exists measurable \(E\) such that
\(p(\mathbb R^d\setminus E)=0\) and \(q(E)=0\). These notions are referred
to as absolute continuity and mutual singularity, respectively.

\begin{lemma}[Lebesgue decomposition; Theorem 3.8 \cite{folland1999real}] \label{lem:decom} For probability measures \(p\) and \(q\), there are unique measures \(q_{\ll p}\ll p\) and \(q_{\perp p}\perp p\) such that \(q=q_{\ll p}+q_{\perp p}\). \end{lemma}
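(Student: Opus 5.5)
The plan is to prove existence through the Radon--Nikodym theorem applied to the dominating measure \(\nu = p + q\), and to prove uniqueness by a separate set-splitting argument. Since \(p\) and \(q\) are probability measures, \(\nu\) is finite. Because \(p \ll \nu\) and \(q \ll \nu\), the Radon--Nikodym theorem gives measurable densities \(f = dp/d\nu\ge 0\) and \(g = dq/d\nu\ge 0\).

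\textbf{Existence.} Let \(E=\{f>0\}\) and \(E^c=\{f=0\}\), and set
\[
q_{\ll p}(A)=q(A\cap E),\qquad q_{\perp p}(A)=q(A\cap E^c).
\]
Then \(q=q_{\ll p}+q_{\perp p}\) holds trivially.

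For singularity, observe that \(p(E^c)=\int_{E^c} f\,d\nu=0\). Meanwhile \(q_{\perp p}\) is concentrated on \(E^c\), that is, \(q_{\perp p}(E)=0\). So in the paper's definition of \(\perp\) we take the set to be \(E^c\): the measure \(q_{\perp p}\) gives full mass to \(E^c\), and \(p(E^c)=0\).

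For absolute continuity, suppose \(p(A)=0\). Then \(\int_A f\,d\nu=0\), so \(f=0\) \(\nu\)-a.e.\ on \(A\). Hence \(\nu(A\cap E)=0\), and therefore \(q(A\cap E)=\int_{A\cap E} g\,d\nu=0\). This gives \(q_{\ll p}\ll p\).

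\textbf{Uniqueness.} Suppose \(q=\lambda_1+\rho_1=\lambda_2+\rho_2\), where \(\lambda_i\ll p\) and \(\rho_i\perp p\). Choose sets \(F_i\) with \(p(F_i)=0\) and \(\rho_i(F_i^c)=0\), and let \(F=F_1\cup F_2\). Then \(p(F)=0\), so \(\lambda_i(F)=0\) for both \(i\); also \(\rho_i(F^c)=0\) for both \(i\).

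For any measurable \(A\), split \(A\) into \(A\cap F\) and \(A\cap F^c\). On the first piece, \(\rho_i(A\cap F)=q(A\cap F)\). On the second piece, \(\lambda_i(A\cap F^c)=q(A\cap F^c)\). Both expressions are independent of \(i\), so \(\lambda_1=\lambda_2\) and \(\rho_1=\rho_2\).

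\textbf{Main obstacle.} The only delicate point is that the decomposition is stated for general probability measures, so we need a common \(\sigma\)-finite dominating measure. Choosing \(\nu=p+q\) settles this, and the uniqueness argument relies only on finiteness, which avoids any \(\infty-\infty\) issues. Alternatively, one can simply cite Folland's Theorem 3.8, as the paper does.
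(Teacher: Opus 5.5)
Your proof is correct, but it takes a different route from the one behind the paper's citation. The paper gives no argument of its own and simply cites Folland's Theorem 3.8. Folland builds the decomposition and the Radon--Nikodym density at the same time: he maximises \(\int f\,\mathrm dp\) over \(f\ge0\) with \(\int_A f\,\mathrm dp\le q(A)\) for all \(A\), and then a Hahn-decomposition lemma shows the remainder \(q-f\,\mathrm dp\) is singular. He proves uniqueness by subtracting two decompositions and noting that a finite signed measure which is both \(\ll p\) and \(\perp p\) vanishes. You instead take the absolutely continuous Radon--Nikodym theorem for \(p,q\ll p+q\) as a black box and read off the singular set explicitly as \(\{\mathrm dp/\mathrm d(p+q)=0\}\). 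This is von Neumann's route, and it also gives \(\mathrm dq_{\ll p}/\mathrm dp=g/f\) on \(\{f>0\}\) for free. Two remarks. First, Folland packages Radon--Nikodym and the Lebesgue decomposition into the same Theorem 3.8, so your argument is non-circular only if the Radon--Nikodym step is taken from an independent proof (for example the \(L^2(p+q)\) Riesz-representation argument). That is standard, but you should say it. Second, your uniqueness argument is stronger than your closing comment suggests. Splitting along the common null set \(F=F_1\cup F_2\) uses only additivity and never subtracts measures, so it needs no finiteness at all. Folland's subtraction argument, by contrast, does rely on finiteness, or on a reduction to finite pieces.
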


\begin{definition}[Total variation \cite{billingsley2013convergence}]
\label{def:tv}
Let \(p\) and \(q\) be probability measures on a measurable space
\((\Theta,\mathcal F)\). The total variation distance between \(p\) and \(q\) is
\[\|p-q\|_{\mathrm{TV}}
:=
\sup_{E\in\mathcal F}|p(E)-q(E)|.
\]
\end{definition}

\begin{definition}[KL divergence \cite{kullback1951information}]
\label{def:kl}
Let \(p\) and \(q\) be probability measures on a measurable space
\((\Theta,\mathcal F)\). If \(q\ll p\), the KL divergence from \(q\) to \(p\) is
\[
D_{\mathrm{KL}}(q\|p)
:=
\int_{\Theta}
\log\!\left(\frac{\mathrm d q}{\mathrm d p}\right)\,\mathrm d q .
\]
Otherwise, we set \(D_{\mathrm{KL}}(q\|p)=\infty\).
\end{definition}

\paragraph{Problem formulation.}
Let \(\Theta={\mathbb R^d}\) be the parameter space, and let
\(\mathcal D=(z_1,\ldots,z_n)\in\mathcal Z^n\) denote the observed data, where
$
z_i\sim P_{\pa},\,i=1,\ldots,n.
$ The likelihood of the observed data under parameter \(\pa\) is denoted by
$
L(\pa\mid\mathcal D).
$
Let \(\pi_0\) be a prior probability measure on \(\Theta\). Whenever the marginal
$
Z_{\mathcal D}
:=
\int_{\Theta} L(\pa\mid\mathcal D)\,\pi_0(\mathrm d\pa)\in(0,\infty)
$, the posterior is defined by
$\pi_{\mathcal D}(E)
=Z_{\mathcal D}^{-1}
{\int_E L(\pa\mid\mathcal D)\,\pi_0(\mathrm d\pa)}
.
$
This map from the prior \(\pi_0\) to the posterior \(\pi_{\mathcal D}\) is called the Bayesian update.

In this paper, Bayesian inference refers to the construction and representation of the posterior distribution induced by the prior and the observed data. The Bayesian update gives the target posterior \(\pi_{\mathcal D}\). Since this posterior is often analytically or computationally intractable, one may further replace it by an approximating measure \(q\) chosen from a tractable variational family \(\mathcal Q\subseteq\mathcal P(\Theta)\).

The variational approximation problem is to find $q^*$ such that
$
q^{*}\in
\mathrm{argmin}_{q\in\mathcal Q}
D_{\mathrm{KL}}(q\|\pi_{\mathcal D}).
$
Or equivalently, maximising the evidence lower bound
$
\mathcal L_{\mathcal D}(q)
:=
\int_{\Theta}\log L(\pa\mid\mathcal D)\,q(\mathrm d\pa)
-
D_{\mathrm{KL}}(q\|\pi_0).
$

\section{The local mass framework}
\label{sec:local framework}

We now introduce the local mass framework.  Throughout this paper, $ p$ and $q$ denote two arbitrary probability measures on $\Theta$.

\subsection{Mass Index: On local small-ball scale}

\label{subsec:mass}

We now introduce the Mass Index as an order-scale summary of the local small-ball mass $ p(B_r(\pa)),r\downarrow0. $ The power component records the reciprocal of the sharp local power scale, normalised by the ambient dimension \(d\). The logarithmic component is defined only after the power scale is fixed, and records the first log-power correction. This construction is deliberately weaker than regular variation: it uses upper and lower power gauges, and does not require an exact asymptotic representation or a slowly varying factor.

\begin{definition}[Power and Logarithmic Mass Indices]\label{define:MI}
Let \(B_r(\pa)\) be the standard open ball of radius \(r\) centred at \(\pa\). We adopt the convention that \(\sup \varnothing = 0\) and \(\inf \varnothing = \infty\).

\textbf{Power Mass Index.}
As $r\downarrow 0$, define
\[
\begin{aligned}
\mathcal{U}_{\mathrm{pow}}
:=
\Big\{\eta>0:
p(B_r(\pa))=O(r^{1/\eta})
\Big\},\\
\mathcal{L}_{\mathrm{pow}}
:=
\Big\{\eta>0:
p(B_r(\pa))=\Omega(r^{1/\eta})
\Big\}.
\end{aligned}
\]
The upper and lower Power Mass Indices are
\[
\overline{\MI}_{\mathrm{pow}}(p,\pa)
:=d\inf\mathcal{U}_{\mathrm{pow}},\,
\underline{\MI}_{\mathrm{pow}}(p,\pa)
:=d\sup\mathcal{L}_{\mathrm{pow}}.
\]
If
$
\overline{\MI}_{\mathrm{pow}}(p,\pa)
=
\underline{\MI}_{\mathrm{pow}}(p,\pa),
$
then the common value is denoted by
$
\MI_{\mathrm{pow}}(p,\pa).
$

\textbf{Logarithmic Mass Index.}
Assume that \(\MI_{\mathrm{pow}}(p,\pa)\) exists and lies in \((0,\infty)\).
Let
$
a_{p,\pa}:=\frac{d}{\MI_{\mathrm{pow}}(p,\pa)}.
$
As $r\downarrow 0$, define 
\[
\begin{aligned}
&\mathcal{U}_{\mathrm{log}}
:=
\big\{
\eta\in\R:
p(B_r(\pa))=O\bigl(r^{a_{p,\pa}}(-\log r)^\eta\bigr)
\big\},\\
&\mathcal{L}_{\mathrm{log}}
:=
\big\{
\eta\in \R:
p(B_r(\pa))=\Omega\bigl(r^{a_{p,\pa}}(-\log r)^\eta\bigr)
\big\}.
\end{aligned}
\]
The upper and lower Logarithmic Mass Indices are
\[
\overline{\MI}_{\mathrm{log}}(p,\pa)
:=
\inf\mathcal{U}_{\mathrm{log}},\,
\underline{\MI}_{\mathrm{log}}(p,\pa)
:=
\sup\mathcal{L}_{\mathrm{log}}.
\]
If
$
\overline{\MI}_{\mathrm{log}}(p,\pa)
=
\underline{\MI}_{\mathrm{log}}(p,\pa),
$
then the common value is denoted by
$
\MI_{\mathrm{log}}(p,\pa).
$
\end{definition}

We compare $(\MI_\mathrm{pow}(p,\pa),\MI_\mathrm{log}(p,\pa))$ using the lexicographic order. 
Throughout the paper, we denote 
$
\MI_{\mathrm{pow}}^{-1}(p,\pa)
:=
\frac{1}{\MI_{\mathrm{pow}}(p,\pa)}
$
whenever \(\MI_{\mathrm{pow}}(p,\pa)\in(0,\infty)\). 

\subsubsection{Basic calibrations}

The following proposition records the basic local calibrations of the Mass Index.

\begin{proposition}[Basic local small-ball calibrations]
\label{prop:basic-calibrations}
Let \(p\) be a probability measure on \(\mathbb R^d\). For some $r_0>0$:

\noindent\textup{(1) Local hole.}
If \(p(B_{r_0}(\pa))=0\), then \(\MI_{\mathrm{pow}}(p,\pa)=0\).

\noindent\textup{(2) Regular continuous mass.}
If \(p\) admits a continuous density \(\rho_p\) on \(B_{r_0}(\pa)\), and \(\rho_p(\pa)>0\), then \(\MI_{\mathrm{pow}}(p,\pa)=1\).

\noindent\textup{(3) Power-log asymptotics.} If 
$
p(B_r(\pa))\asymp r^a(-\log r)^b,\allowbreak
\, r\downarrow 0
$ with $a>0$, then $\MI_{\mathrm{pow}}(p,\pa)=da^{-1},\MI_{\mathrm{log}}(p,\pa)=b.$

\noindent\textup{(4) Atomic mass.}
If \(p(\{\pa\})>0\), then \(\MI_{\mathrm{pow}}(p,\pa)=\infty\).
\end{proposition}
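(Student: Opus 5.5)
Each of the four claims reduces to identifying the sets \(\mathcal U_{\mathrm{pow}}\) and \(\mathcal L_{\mathrm{pow}}\) (and, in (3), \(\mathcal U_{\mathrm{log}}\), \(\mathcal L_{\mathrm{log}}\)) directly from Definition~\ref{define:MI}, using the conventions \(\sup\varnothing=0\) and \(\inf\varnothing=\infty\). We also use one elementary fact repeatedly: for \(0<r<1\), the map \(\eta\mapsto r^{1/\eta}\) is increasing in \(\eta\). Hence \(\mathcal U_{\mathrm{pow}}\) is an upward-closed ray and \(\mathcal L_{\mathrm{pow}}\) is a downward-closed ray. In each case it therefore suffices to locate one threshold.

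\textbf{(1) and (4).} In case (1), \(p(B_r(\pa))=0\) for all \(r<r_0\). Every \(\eta>0\) then satisfies \(p(B_r(\pa))=O(r^{1/\eta})\), so \(\mathcal U_{\mathrm{pow}}=(0,\infty)\) and \(\overline{\MI}_{\mathrm{pow}}=0\). No \(\eta\) gives a positive lower bound, so \(\mathcal L_{\mathrm{pow}}=\varnothing\) and \(\underline{\MI}_{\mathrm{pow}}=d\sup\varnothing=0\). The two indices agree, giving \(\MI_{\mathrm{pow}}(p,\pa)=0\).

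In case (4), \(p(B_r(\pa))\ge p(\{\pa\})=:c>0\) for every \(r\). Since \(r^{1/\eta}\le 1\) for \(r<1\), every \(\eta>0\) lies in \(\mathcal L_{\mathrm{pow}}\), so \(\underline{\MI}_{\mathrm{pow}}=\infty\). Conversely, \(c=O(r^{1/\eta})\) fails because \(r^{1/\eta}\to0\). Thus \(\mathcal U_{\mathrm{pow}}=\varnothing\) and \(\overline{\MI}_{\mathrm{pow}}=\infty\), so \(\MI_{\mathrm{pow}}(p,\pa)=\infty\).

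\textbf{(3).} Suppose \(c_1 r^a(-\log r)^b\le p(B_r(\pa))\le c_2 r^a(-\log r)^b\) for small \(r\).
\begin{itemize}[leftmargin=*,nosep]
\item If \(1/\eta<a\), then \(r^a(-\log r)^b/r^{1/\eta}\to0\), because a positive power of \(r\) beats any power of \(-\log r\). Hence \(\eta\in\mathcal U_{\mathrm{pow}}\), and \(\eta\notin\mathcal L_{\mathrm{pow}}\) by the lower bound.
\item If \(1/\eta>a\), the ratio tends to \(\infty\). Hence \(\eta\in\mathcal L_{\mathrm{pow}}\setminus\mathcal U_{\mathrm{pow}}\).
\end{itemize}
Therefore \(\inf\mathcal U_{\mathrm{pow}}=\sup\mathcal L_{\mathrm{pow}}=1/a\), so \(\MI_{\mathrm{pow}}=d/a\) and \(a_{p,\pa}=a\).

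For the logarithmic index, \(p(B_r)=O(r^a(-\log r)^\eta)\) holds iff \((-\log r)^{b-\eta}\) is bounded, that is, iff \(\eta\ge b\). Symmetrically, the \(\Omega\) condition holds iff \(\eta\le b\). So \(\mathcal U_{\mathrm{log}}=[b,\infty)\) and \(\mathcal L_{\mathrm{log}}=(-\infty,b]\), giving \(\MI_{\mathrm{log}}=b\).

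\textbf{(2).} Write \(\omega_d\) for the volume of the unit ball. By continuity, for any \(\varepsilon>0\) and all small \(r\), the density satisfies \(\rho_p(\pa)-\varepsilon\le\rho_p\le\rho_p(\pa)+\varepsilon\) on \(B_r(\pa)\). Taking \(\varepsilon=\rho_p(\pa)/2\) gives \(p(B_r(\pa))\asymp\omega_d r^d\). This reduces (2) to (3) with \(a=d\) and \(b=0\), so \(\MI_{\mathrm{pow}}=d/d=1\).

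There is no substantive obstacle. The step needing most care is the bookkeeping in (1) and (4), where the empty-set conventions must produce the correct endpoint values \(0\) and \(\infty\). A related subtlety in (1) is that the zero function counts as \(O(r^{1/\eta})\) but not as \(\Omega(r^{1/\eta})\), since \(\Omega\) requires a positive constant.
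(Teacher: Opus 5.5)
Your proof is correct and takes essentially the same approach as the paper. Parts (1) and (4) use the empty-set conventions exactly as the paper does, and (2) likewise reduces to $p(B_r(\pa))\asymp r^d$. The only differences are cosmetic: you identify the gauge sets in (3) directly instead of citing the paper's gauge-identification lemma, and you derive (2) from (3) with $a=d$, $b=0$ instead of appealing to the definition.
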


\textit{Interpretation.} Proposition \ref{prop:basic-calibrations} gives basic local small-ball calibrations of the Mass Index. A local hole, regular continuous mass, and an atom correspond to \(\MI_{\mathrm{pow}}=0,1,\infty\), respectively. The factor \(d\) normalises the reciprocal exponent against the ordinary full-dimensional scaling \(r^d\). 

Table \ref{table:MI-common} gives representative one-dimensional examples at \(\pa=0\).

\begin{table}[t]
\centering
\caption{Power and Logarithmic Mass Indices at $0$.}
\label{table:MI-common}
\setlength{\tabcolsep}{4pt}
\renewcommand{\arraystretch}{1.15}
\begin{tabular}{@{}lcc@{}}
\hline
\textbf{Distribution}
& \(\MI_{\mathrm{pow}}\)
& \(\MI_{\mathrm{log}}\) \\
\hline
Gaussian
& \(1\) & \(0\) \\

Student-\(t_\nu\) 
& \(1\) & \(0\) \\

Laplace
& \(1\) & \(0\) \\

Horseshoe
& \(1\) & \(1\) \\

Spike-and-slab
& \(\infty\) & - \\
\hline
\end{tabular}
\end{table}

\subsubsection{Scope of the definition}

Although the Horseshoe prior and the Gaussian prior both have \(\MI_{\mathrm{pow}}=1\), the Horseshoe prior is widely used in Bayesian compression \cite{ghosh_model_nodate}, and its behaviour near zero is closely connected with an additional logarithmic-order factor. Thus, a logarithmic refinement is needed if one wants to distinguish these two priors beyond their leading power order.

The cases \(\MI_{\mathrm{pow}}=0\) and \(\MI_{\mathrm{pow}}=\infty\) correspond to behaviours outside the polynomial scale, such as super-polynomial or logarithmic-type decay. Thus, the Power Mass Index should be viewed as a modest power-scale calibration. It captures the dominant polynomial order when meaningful, but it does not distinguish all finer sub-polynomial or super-polynomial behaviours.

\subsection{RE-KL: A set-localised divergence}

\label{subsec:rekl}

Avlogiaris et al. \cite{avlogiaris2016local} introduced local divergences
based on Csiszár \(\phi\)-divergences, while Póczos et al.
\cite{poczos2011estimation} studied global Tsallis-\(\alpha\) divergences
and their nonparametric estimation.  Our construction is related to both:
we use a Tsallis-type generator within the extended \(f\)-divergence framework
\cite{wan2020f,zhang_variational_2024} and localise it to a measurable set \(E\).
The resulting quantity separates absolutely continuous and singular
contributions and relates them to small-ball mass behaviour.

\begin{definition}[Regularised extended Kullback-Leibler divergence]
For \(\alpha\in(0,1)\), define
\[
f_\alpha(x)
:=\frac
{x^\alpha-\alpha x+\alpha-1}{{\alpha-1}},
\, x\in[0,\infty).\]
The Lebesgue decomposition (Lemma \ref{lem:decom}) of $q$ with respect to $p$ is
$
q=q_{\ll p}+q_{\perp p}.
$
Then the RE-KL divergence on any measurable set $E$ is defined as
\[
D_\alpha^E(q\|p):=\underbrace{\int_E\,f_\alpha\left(\frac{\mathrm dq_{\ll p}}{\mathrm dp}\right)\,\mathrm dp}_\text{absolutely continuous part}+\underbrace{\frac{\alpha}{1-\alpha}q_{\perp p}(E)}_\text{singular penalty}.
\]
\end{definition}

\subsubsection{Relation to standard divergences}

On the whole space $\Theta$ with $q\ll p$:

\noindent\textup{(1) KL divergence.} When $\alpha\uparrow 1$, $D_\alpha^\Theta(q\|p)\uparrow D_{\mathrm{KL}}(q\|p)$.

\noindent\textup{(2) Hellinger distance $\mathcal H(q,p)$ \cite{beran1977minimum}.} When $\alpha=1/2$, \(D_{1/2}^\Theta(q\|p)\allowbreak=2\mathcal H^2(q,p)=2\bigl(1-\int_\Theta \sqrt{\mathrm dq/\mathrm dp}\,\mathrm dp\bigr)\).

\noindent\textup{(3) Tsallis-$\alpha$ divergence $T_\alpha$ \cite{poczos2011estimation}.} When $\alpha\allowbreak\in(0,1)$, \(D_{\alpha}^\Theta(q\|p)\allowbreak=T_\alpha(q\|p)
\allowbreak=
(\alpha-1)^{-1}
\left(
\int_{\Theta} q^\alpha(x)p^{1-\alpha}(x)\,\mathrm dx\allowbreak
-1
\right).
\)

Thus, the present definition can be viewed as a mild localisation of the
standard extended \(f\)-divergence, isolating the contribution of a measurable
set \(E\) while keeping track of the corresponding singular part.

\subsection{Why global KL is not enough}
Global KL divergence need not determine local probability mass.
Related but different concerns appear in the GAN literature under mode collapse and incomplete
mode coverage \cite{lin2018pacgan,zhong2019rethinking}. We use this connection only as motivation
for separating global closeness from local small-ball behaviour. The result
below formulates this distinction through the Power Mass Index and shows that
global KL convergence alone need not determine the local power order at \(\pa\).

\begin{theorem}[Sequential discontinuity of the Power Mass Index under KL convergence]
\label{thm:kl-mi-discontinuous}
Fix \(\pa\in\Theta\). Define
$
\mathcal D_{\pa}:=
\{q\mkern-2mu\in\mkern-2mu\mathcal P(\Theta):
\MI_{\mathrm{pow}}(q,\pa)\in(0,\infty)\}.
$

Let \(p\in\mathcal D_{\pa}\). Then, for every
\(k>0\) with \(k\neq \MI_{\mathrm{pow}}(p,\pa)\), there exists a sequence
\(\{q_n\}_{n\ge1}\subseteq \mathcal D_{\pa}\) such that
$
D_{\mathrm{KL}}(q_n\|p)\to0,
$
but
$
\MI_{\mathrm{pow}}(q_n,\pa)=k
\,\allowbreak
\text{for every } n\ge1.
$
In particular, the functional
$
\Phi_{\pa}:\mathcal D_{\pa}\to(0,\infty),
\,
\Phi_{\pa}(q):=\MI_{\mathrm{pow}}(q,\pa)
$
is not sequentially continuous at \(p\) under global KL convergence.
\end{theorem}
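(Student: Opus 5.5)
The plan is to build \(q_n\) as a small mixture perturbation of \(p\) whose local behaviour at \(\pa\) is dictated by a carefully chosen component. Let \(m:=\MI_{\mathrm{pow}}(p,\pa)\in(0,\infty)\) and \(a:=d/k\). I would take
\[
q_n:=(1-\varepsilon_n)\,p+\varepsilon_n\,\nu ,\qquad \varepsilon_n\downarrow 0,
\]
with \(\nu\) a probability measure satisfying \(\nu\ll p\), \(\mathrm d\nu/\mathrm dp\) bounded, and \(\nu(B_r(\pa))\asymp r^{a}\) when \(k>m\); the case \(k<m\) needs a different device, described below. For the KL part I would use the convexity bound \(D_{\mathrm{KL}}(q_n\|p)\le \varepsilon_n D_{\mathrm{KL}}(\nu\|p)\). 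If the Radon--Nikodym derivative is bounded by \(M\), then \(D_{\mathrm{KL}}(\nu\|p)\le\log M<\infty\), so \(D_{\mathrm{KL}}(q_n\|p)\to0\).

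\textbf{Case \(k<m\): making \(q_n\) locally thinner.} Here \(q_n\) must have smaller power index, i.e.\ less mass near \(\pa\), so adding mass cannot work. Instead I would reweight: \(q_n(\mathrm d\pa')\propto w_n(\pa')\,p(\mathrm d\pa')\) with \(w_n=1\) outside \(B_{1/n}(\pa)\) and \(w_n(\pa')=\bigl(|\pa'-\pa|\,n\bigr)^{\beta}\) inside, where \(\beta>0\). Then \(w_n\le1\) and \(w_n\) deviates from \(1\) only on a set of \(p\)-mass tending to zero, so the normaliser tends to \(1\) and \(D_{\mathrm{KL}}(q_n\|p)=\int w_n\log w_n\,\mathrm dp/Z_n-\log Z_n\to0\) by dominated convergence. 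The weight multiplies small-ball mass roughly by \(r^{\beta}\), so the target exponent is \(d/k=d/m+\beta\), i.e.\ \(\beta=d/k-d/m>0\). The difficulty is that \(p\) only satisfies \(O\)/\(\Omega\) power gauges, not exact asymptotics, so \(\int_{B_r}|\cdot-\pa|^{\beta}\,\mathrm dp\) need not be \(\asymp r^{\beta}p(B_r)\). To fix this I would replace \(p\) locally as well: set \(q_n\) equal to \((1-\varepsilon_n)p\) restricted outside \(B_{1/n}(\pa)\), plus a mass-matching piece on \(B_{1/n}(\pa)\) equal to \(c_n\) times a reference measure \(\lambda\). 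That reference should have exact small-ball law \(r^{d/k}\), for instance the radial law with density proportional to \(|x-\pa|^{d/k-d}\). This construction, however, needs \(\lambda\ll p\) to keep the KL finite, and \(p\) may be singular.

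\textbf{Main obstacle: absolute continuity.} Because \(p\) can be arbitrary within \(\mathcal D_{\pa}\), for instance singular, fractal-like, or supported on a lower-dimensional set, any perturbation must be absolutely continuous with respect to \(p\) and expressible as a density against \(p\). The key step is therefore to build, for each target exponent \(s=d/k\), a density \(g\) with respect to \(p\) that is bounded above (for \(k>m\), allowed to blow up near \(\pa\) but integrable with finite \(\int g\log g\,\mathrm dp\)) and satisfies \(\int_{B_r}g\,\mathrm dp\asymp r^{s}\).

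I would do this with dyadic shells \(A_j=B_{2^{-j}}(\pa)\setminus B_{2^{-j-1}}(\pa)\), setting \(g=c_j\) on \(A_j\) with \(c_j=2^{-js}/p(A_j)\) for the shells where \(p(A_j)>0\). Summing over \(j\ge J\) gives \(\nu(B_{2^{-J}})\asymp 2^{-Js}\), provided enough shells carry mass. This is guaranteed because the power index of \(p\) is finite and positive, which forces \(p(B_r)>0\) for all \(r\) and a polynomial rate, so nonempty shells occur within bounded gaps.

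Finally I would choose the mixture weight: taking \(\varepsilon_n\) small relative to the entropy term \(\int g\log g\,\mathrm dp\), which is finite once \(c_j\) grows at most polynomially in \(2^{j}\), gives KL convergence, and Proposition~\ref{prop:basic-calibrations}(3) then gives \(\MI_{\mathrm{pow}}(q_n,\pa)=k\). For \(k<m\) I would use \(q_n=p\cdot(\mathbf 1_{B_{\delta_n}^c}+g\mathbf 1_{B_{\delta_n}})/Z_n\) with \(\delta_n\to0\), so that the modified region carries vanishing mass and vanishing entropy. Verifying the shell-gap claim from the one-sided gauges is where I expect the real work.
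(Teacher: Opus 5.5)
There is a genuine gap, and it is in the step you flagged yourself. The dyadic-shell density ($g=c_j$ on $A_j$, $c_j=2^{-js}/p(A_j)$) fails for general $p\in\mathcal D_{\pa}$ in two independent ways.

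\emph{Gaps.} Existence of $\MI_{\mathrm{pow}}(p,\pa)=m$ only controls the cumulative mass up to $\varepsilon$-losses: $r^{a+\varepsilon}\lesssim p(B_r(\pa))\lesssim r^{a-\varepsilon}$ with $a=d/m$. So runs of $p$-null shells need not be bounded. For example, take $p$ uniform on spheres of radii $\rho_i=e^{-i^2}$, with weights chosen so that $p(B_r(\pa))=\rho_{i+1}^{a}$ for $r\in(\rho_{i+1},\rho_i]$. Then $\MI_{\mathrm{pow}}(p,\pa)=d/a$, but about $(2i+1)/\log 2$ consecutive dyadic shells are empty. For this $p$, every $\nu\ll p$ has $\nu(B_r(\pa))$ constant on $(\rho_{i+1},\rho_i]$. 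Since $\rho_i/\rho_{i+1}\to\infty$, no $\nu\ll p$ can satisfy $\nu(B_r(\pa))\asymp r^{s}$. Your target and the closing appeal to Proposition~\ref{prop:basic-calibrations}(3) are therefore unattainable in general. Only $o(j)$ gaps hold, and you would have to close with the $\varepsilon$-gauges of Lemma~\ref{lem:gauge-identification}(i).

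\emph{Entropy.} A nonempty shell may carry super-polynomially small mass. For instance, add mass $\propto e^{-2^{2^j}}$ to each otherwise empty shell $A_j$ of the example above; the index is unchanged. Then $c_j$ is not polynomial in $2^j$, and $\sum_j 2^{-js}\log c_j=\infty$. So $D_{\mathrm{KL}}(\nu\|p)=\infty$ and the convexity bound gives nothing. The same light shells make your $g$ unbounded in the case $k<m$.

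Your opening requirement for $k>m$ is also inconsistent. A bound $\mathrm d\nu/\mathrm dp\le M$ gives $\nu(B_r(\pa))\le M\,p(B_r(\pa))$, hence $\MI_{\mathrm{pow}}(\nu,\pa)\le m<k$.

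The paper avoids both problems by working in $p$'s own radial quantile scale rather than in Euclidean radius. On $E_n=B_{\rho_n}(\pa)$ it keeps the angular kernel of $p(\cdot\mid E_n)$ and replaces its radial distribution function $F_n$ by $F_n^{t}$, with $t=m/k$. This gives exactly $q_n(B_r(\pa))=a_n^{1-t}p(B_r(\pa))^{t}$, where $a_n=p(E_n)$, so the gauge sets simply rescale by $1/t$. Data processing from the density $tu^{t-1}$ on $[0,1]$ then gives $D_{\mathrm{KL}}(q_n\|p)\le a_n(\log t+1/t-1)\to0$. This works for either sign of $k-m$ and for singular $p$.

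Alternatively, your first reweighting idea already works, for both signs. Take $q_n\propto w_n\,p$ with $w_n(\bm x)=\min\{1,n\|\bm x-\pa\|\}^{\beta}$ and $\beta=d/k-d/m>-d/m$. This is exactly the setting of Theorem~\ref{thm:local-likelihood-scaling} with $\gamma=\beta$. The annulus arguments in its proof ($e^{-j}$-annuli for the upper bound, $B_r(\pa)\setminus B_{r^s}(\pa)$ for the lower bound) give the required $\varepsilon$-gauges without any $\asymp$ asymptotics of $p$. The same annulus sums show $\int_{B_{1/n}(\pa)}w_n|\log w_n|\,\mathrm dp\to0$ and $Z_n\to1$, hence $D_{\mathrm{KL}}(q_n\|p)\to0$.
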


\textit{Interpretation.}
Theorem~\ref{thm:kl-mi-discontinuous} gives a cautionary example for
variational approximation.  Even when \(D_{\mathrm{KL}}(q_n\|p)\to0\),
global KL convergence alone need not preserve the local small-ball power order
near \(\pa\).  KL-small perturbations may still change the Power Mass Index at
\(\pa\).  This does not mean that larger variational families necessarily
cause local distortion.  It shows only that, without additional structural
restrictions, such stability cannot be derived from the KL objective alone.

\textit{Proof sketch.}
Modify \(p\) only on \(B_{\rho_n}(\pa)\), where \(\rho_n\downarrow0\), and keep
\(q_n=p\) outside. Since \(p\) has no atom at \(\pa\), the modified mass tends to
zero. Redistribute this mass so that the local power index at \(\pa\) equals
\(k\). Then \(\MI_{\mathrm{pow}}(q_n,\pa)=k\) for all \(n\), while the KL cost is
confined to a set of vanishing \(p\)-mass, so \(D_{\mathrm{KL}}(q_n\|p)\to0\).

\section{Local mass preservation theory}

\label{sec:theory}

Using the two local tools developed in Section~\ref{sec:local framework},
we analyse the mathematical structure of local probability mass and local
discrepancy under Bayesian update and variational approximation.

\subsection{Bayesian updating and local mass scaling}

We first study how Bayesian updating changes local mass.

\subsubsection{Stability under Bayesian updating}

We prove a result that characterises how the likelihood affects the preservation of local mass under Bayesian update. The analysis is based on the following two assumptions.

\begin{assumption}[Finite marginal]
\(0<Z_\mathcal D<\infty\).
\label{as:bayes}
\end{assumption}

Assumption \ref{as:bayes} ensures that the posterior exists and is well-defined as a probability measure.

\begin{assumption}[Well-defined Power Mass Indices] \label{as:existence} At \(\pa\), \(\MI_{\mathrm{pow}}(p,\pa)\) exists.
\end{assumption}

Assumption~\ref{as:existence} rules out
scale-wise oscillations of small-ball probabilities by requiring the upper and
lower Power Mass Indices to coincide. 

Example~\ref{exp:osc} shows that the MI may fail to exist.
It is deliberately artificial: it builds infinite oscillations between two local power laws. Such behaviour is absent from standard priors with atoms, regular densities, finite mixtures, or regularly varying small-ball masses. Hence Assumption~\ref{as:existence} rules out pathological oscillations, not ordinary Bayesian models.

\begin{theorem}[Local likelihood scaling]
\label{thm:local-likelihood-scaling}
Under Assumption \ref{as:bayes} and \ref{as:existence} with $p=\pi_0$,
suppose that for some \(\gamma,\beta\in\R\), the likelihood satisfies
\[
L(\bm x\mid \mathcal D)
\asymp\!
\|\bm x-\pa\|^\gamma
\left(-\log {\|\bm x-\pa\|}\right)^\beta,\,\bm x\to\pa.
\]
When $\MI_{\mathrm{pow}}(\pi_0,\pa) \in(0,\infty)$,
if $
\gamma
>-{d}\,{\MI^{-1}_{\mathrm{pow}}(\pi_0,\pa)}
,
$
then \(\MI_{\mathrm{pow}}(\pi_{\mathcal D},\pa)\) exists and
\[
\,{\MI^{-1}_{\mathrm{pow}}(\pi_{\mathcal D},\pa)}
=
\,{\MI^{-1}_{\mathrm{pow}}(\pi_0,\pa)}
+
d^{-1}\gamma.
\]

Furthermore, if \(\MI_{\mathrm{log}}(\pi_0,\pa)\) exists, then
\(\MI_{\mathrm{log}}(\pi_{\mathcal D},\pa)\) exists and
\[
\MI_{\mathrm{log}}(\pi_{\mathcal D},\pa)
=
\MI_{\mathrm{log}}(\pi_0,\pa)
+
\beta.
\]
When $\MI_{\mathrm{pow}}(\pi_0,\pa)=0$, then \(\MI_{\mathrm{pow}}(\pi_{\mathcal D},\pa)\) exists and
\[
\MI_{\mathrm{pow}}(\pi_{\mathcal D},\pa)
=
0.
\]
The condition
$
\gamma>-d\,\MI^{-1}_{\mathrm{pow}}(\pi_0,\pa)
$
is a natural sufficient condition required by Assumption \ref{as:bayes}. 
We therefore restrict the theorem to this non-degenerate case and do not discuss the remaining degenerate cases in this paper.

\end{theorem}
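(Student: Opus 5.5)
Write $F(r):=\pi_0(B_r(\pa))$ and $a:=d\,\MI^{-1}_{\mathrm{pow}}(\pi_0,\pa)$, so $a\in(0,\infty)$ in the main case. The plan is to compute the unnormalised posterior mass $N(r):=\int_{B_r(\pa)}L(\bm x\mid\mathcal D)\,\pi_0(\mathrm d\bm x)$ by a layer-cake (Stieltjes) representation in the radial variable $s=\|\bm x-\pa\|$. Since $\pi_{\mathcal D}(B_r(\pa))=N(r)/Z_{\mathcal D}$ and $Z_{\mathcal D}\in(0,\infty)$ by Assumption~\ref{as:bayes}, the normalising constant does not affect any $O$ or $\Omega$ statement. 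Hence every Mass Index of $\pi_{\mathcal D}$ at $\pa$ is read off from $N(r)$.

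First I use the hypothesis $L\asymp g(s):=s^\gamma(-\log s)^\beta$ on some $B_{r_0}(\pa)$, with $r_0<1$ and $g$ monotone on $(0,r_0)$ after shrinking $r_0$. This reduces $N(r)$, up to constants, to $\int_{(0,r)}g(s)\,\mathrm dF(s)$. Note that $\pi_0(\{\pa\})=0$, since otherwise $\MI_{\mathrm{pow}}=\infty$.

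\textbf{Case $\gamma\ge0$.} For an upper bound I use $g(s)\le C g(r)$ on $(0,r)$ (for small $r$, $g$ is increasing, or nearly so when $\gamma=0$ and $\beta>0$, where it is bounded by a slowly varying factor). This gives $N(r)\lesssim g(r)F(r)$. For a lower bound I integrate only over the annulus $r/2\le s<r$, which gives $N(r)\gtrsim g(r)\,[F(r)-F(r/2)]$.

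\textbf{Case $\gamma<0$.} Here I integrate by parts:
\[
\int_{(0,r)} g\,\mathrm dF = g(r)F(r)-\int_0^r F(s)g'(s)\,\mathrm ds .
\]
The boundary term at $0$ vanishes because $F(s)=O(s^{a-\epsilon})$ and $\gamma>-a$. Then I bound $F$ by the power gauges.

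Next I translate the bounds into indices using the gauge sets directly. Take $\eta$ with $1/\eta$ slightly above $a$. From $F(s)=O(s^{1/\eta})$ and the representation above, $N(r)=O(r^{1/\eta+\gamma}(-\log r)^{\beta})$, and the log factor is absorbed by any $\epsilon$-loss in the exponent. This puts every exponent above $a+\gamma$ into the upper gauge set, so $\overline{\MI}_{\mathrm{pow}}(\pi_{\mathcal D},\pa)\le d/(a+\gamma)$. The condition $\gamma>-a$ is exactly what makes $\int_0^r s^{1/\eta+\gamma-1}\,\mathrm ds$ converge, with $\epsilon$ small.

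For the lower bound I avoid annuli: since $g(s)\gtrsim g(r)$ on $(0,r)$ when $\gamma\le0$, I get $N(r)\gtrsim g(r)F(r)$ directly. When $\gamma>0$, I instead use $N(r)\ge\int_{B_r\setminus B_{\delta r}}\ge g(\delta r)[F(r)-F(\delta r)]$. Together with $N(r)\lesssim g(r)F(r)$, this yields the result as soon as $F(\delta r)\le F(r)/2$ along the relevant scales.

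\textbf{Main obstacle.} The hard step is this lower bound for $\gamma>0$ when $F$ is only known through $O/\Omega$ gauges, since mass might concentrate on thin shells. I would handle it by iteration. If $F(\delta r)>F(r)/2$ held for all $\delta$ on a whole range, then iterating $k$ times would give $F(\delta^k r)>2^{-k}F(r)$. Choosing $\delta$ with $\delta^{a+\epsilon}<1/2$ contradicts $F=O(s^{a-\epsilon'})$. More carefully, I choose $\delta$ so that $2^{-k}\gg\delta^{k(a-\epsilon)}$ fails, then use the lower gauge $F(r)\gtrsim r^{a+\epsilon}$ to select, for each $r$, some $r'\in[\delta^k r,r]$ with a mass drop. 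Because both $r'$ and the loss are comparable to $r$ up to $\epsilon$-powers, this gives $N(r)\gtrsim r^{a+\gamma+O(\epsilon)}$. That suffices for power indices, which only need $\epsilon$-loss.

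\textbf{Logarithmic index.} Here no $\epsilon$-loss in the power is allowed, so I redo the bounds with $F(s)\lesssim s^{a}(-\log s)^{b+\epsilon}$ and $F\gtrsim s^a(-\log s)^{b-\epsilon}$, where $b=\MI_{\mathrm{log}}(\pi_0,\pa)$. The integral $\int_0^r s^{a+\gamma-1}(-\log s)^{b+\beta+\epsilon}\,\mathrm ds\asymp r^{a+\gamma}(-\log r)^{b+\beta+\epsilon}$ gives the upper bound. For the lower bound, the same shell-selection argument with $\delta$ fixed now loses only a constant, because $(-\log \delta^k r)\asymp(-\log r)$. Hence $b+\beta$ is the logarithmic index.

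\textbf{Hole case $\MI_{\mathrm{pow}}(\pi_0,\pa)=0$.} Here $F(r)$ decays faster than every power, and $L\lesssim r^{\gamma-\epsilon}$ near $\pa$ allows only polynomial growth. So I bound $N(r)\le\int_{B_r}L\,\mathrm d\pi_0$ by Hölder or a dyadic-shell sum, $\sum_k (2^{-k}r)^{\gamma-\epsilon}F(2^{-k+1}r)$, which is super-polynomially small. Hence every $\eta>0$ lies in $\mathcal U_{\mathrm{pow}}$. The lower gauge set is empty, so $\underline{\MI}_{\mathrm{pow}}=0$ and $\MI_{\mathrm{pow}}(\pi_{\mathcal D},\pa)=0$.
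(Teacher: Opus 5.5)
You take a different route from the paper for the upper bounds, and that part is fine. The integration-by-parts identity $\int_{(0,r)}g\,\mathrm dF=g(r)F(r)-\int_0^rF(s)g'(s)\,\mathrm ds$ is a legitimate alternative to the paper's sum over geometric annuli $\sum_j r_j^{\gamma-v}m_0(r_j)$. Your power-level shell selection also works, because there the $r^{O(\epsilon)}$ losses it incurs are affordable.

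\textbf{The gap: the logarithmic lower bound.} You assert that shell selection \emph{with $\delta$ fixed now loses only a constant}. This is false. With $\ell=-\log$, the gauges give only $c\,s^a\ell(s)^{b-\epsilon}\le F(s)\le C\,s^a\ell(s)^{b+\epsilon}$. Under these, the chain $F(\delta^jr)>F(\delta^{j-1}r)/2$, $j\le k$, is contradicted only once $(2\delta^a)^{-k}\gtrsim\ell(r)^{2\epsilon}$, i.e.\ after $k\asymp\epsilon\log\ell(r)$ steps. The selected shell then lies at radius $r\,\ell(r)^{-O(\epsilon)}$ and is guaranteed only mass $\ell(r)^{-O(\epsilon)}F(r)$.

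This is not an artefact of your method. Put mass on spheres about $\pa$ of radii $\rho_m$ with $\rho_{m+1}=\rho_m/\log\ell(\rho_m)$. Weight them so that the mass within distance $\rho_m$ (sphere included) is $\rho_m^a\ell(\rho_m)^b$. Then $r^a\ell(r)^b(\log\ell(r))^{-a}\lesssim F(r)\le r^a\ell(r)^b$, so $\MI_{\mathrm{log}}(\pi_0,\pa)=b$ exists. Yet for every fixed $c\in(0,1)$ the annulus $B_{\rho_m}(\pa)\setminus B_{c\rho_m}(\pa)$ is $\pi_0$-null for all large $m$. Your justification $(-\log\delta^kr)\asymp(-\log r)$ controls only the logarithmic factor of $g$. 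The losses that actually matter are $\delta^{k\gamma}$ and $2^{-k}$.

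\textbf{The repair.} Let the inner radius shrink logarithmically, exactly as the paper does. Take $\rho(r)=r\,\ell(r)^{-K}$ with $Ka>2u$. Direct subtraction gives $F(r)-F(\rho(r))\ge c\,r^a\ell(r)^{b-u}-C\,r^a\ell(r)^{b+u-Ka}\gtrsim r^a\ell(r)^{b-u}$. On the annulus, $\|\bm x-\pa\|^\gamma\ge r^\gamma\ell(r)^{-K\gamma_+}$ with $\gamma_+=\max\{\gamma,0\}$. Hence $N(r)\gtrsim r^{a+\gamma}\ell(r)^{b+\beta-u-K\gamma_+}$. This suffices, because the log index only needs $\Omega(r^{a+\gamma}\ell(r)^{b+\beta-\epsilon})$ for each $\epsilon>0$; no constant-loss statement is required.

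The same one-line subtraction with inner radius $r^s$, $s>1$ close to $1$, also replaces your power-level iteration. Once $s(a-u)>a+u$, you get $F(r)-F(r^s)\ge c\,r^{a+u}-C\,r^{s(a-u)}\gtrsim r^{a+u}$. The cost in $\|\bm x-\pa\|^\gamma$ is only $r^{(s-1)\gamma_+}$. The thin-shell obstacle disappears once the inner radius is allowed to depend on $r$ in this way.

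\textbf{Smaller slips.} First, the upper power gauges of $F$ are exponents slightly \emph{below} $a$, not above. Your conclusion $\overline{\MI}_{\mathrm{pow}}(\pi_{\mathcal D},\pa)\le d/(a+\gamma)$ is still correct. Second, at $\gamma=0$ your monotonicity claims break: $g(s)\le Cg(r)$ on $(0,r)$ fails for $\beta>0$, and $g(s)\gtrsim g(r)$ fails for $\beta<0$. These cases are nonetheless covered by your integration by parts and your annulus argument, respectively.
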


\textit{Interpretation.}
Theorem~\ref{thm:local-likelihood-scaling} describes Bayesian updating as a
local reweighting of the prior by the likelihood. If the likelihood is locally
of power-log order,
then the polynomial factor shifts the posterior Power Mass Index, while the
logarithmic factor shifts only the posterior Logarithmic Mass Index.

In the
regular case, where the likelihood is bounded above and below by positive
constants near \(\pa\), we have \(\gamma=\beta=0\). So Bayesian updating
preserves both local mass scales. Thus local mass distortion under exact
Bayesian updating arises only from non-regular local likelihood behaviour,
such as zeros, singularities, logarithmic corrections, or support constraints.

\textit{Proof sketch.}
By Bayes' formula, the problem reduces to estimating the likelihood-weighted
prior mass of \(B_r(\pa)\). The upper estimates are obtained by decomposing
\(B_r(\pa)\) into a sequence of shrinking annuli: on each annulus, the
likelihood is controlled according to its distance scale, and the upper Mass
Index bounds are applied to the prior mass. For the lower estimates, we
restrict the integral to an outer annulus. On these annuli, the likelihood has
a uniform lower bound, while the discarded inner part is negligible, yielding
the matching lower order.

\subsubsection{Parameter-dependent support and surviving local mass} 

In economics, parameter-dependent support arises in certain parametric auction models, search models, among others \cite{hirano2003asymptotic}. In these models, the boundary of the support of the observed data depends on some parameters of interest and on regressor variables. The likelihood can then be written as
\begin{equation}
L(\bm x\mid \mathcal D)
=
g(\bm x)\mathbf{1}_E(\bm x),\,\text{with}\,g(\bm x)\asymp 1,\bm x\to \pa.\label{eq:g1}
\end{equation}
The indicator \(\mathbf{1}_E\) may remove part, or all,
of the local prior mass near \(\pa\). 
The following theorem characterises the effect of such removal on local-mass preservation.

\begin{theorem}[Acceptance-ratio scaling of local Mass Indices]
\label{thm:acceptance}
Under Assumption~\ref{as:bayes}, assume also that
\[
0<\MI_{\mathrm{pow}}(\pi_0,\pa)<\infty .
\]
The likelihood satisfies \eqref{eq:g1}. Define the local acceptance ratio
\[
A_{E,\pa}(r)
:=\frac
{\pi_0(B_r(\pa)\cap E)}{\pi_0(B_r(\pa))}\in[0,1],
\]
whenever \(\pi_0(B_r(\pa))>0\). If 
\[
A_{E,\pa}(r)\asymp r^\delta\left(-\log{r}\right)^\lambda,\,r\to 0,
\] where $\delta>0$ or $\delta=0,\lambda\le 0$,
then \(\MI_{\mathrm{pow}}(\pi_{\mathcal D},\pa)\) exists and
\[
{\MI^{-1}_{\mathrm{pow}}(\pi_{\mathcal D},\pa)}
=
{\MI^{-1}_{\mathrm{pow}}(\pi_0,\pa)}
+d^{-1}
\delta.
\]

Furthermore, if \(\MI_{\mathrm{log}}(\pi_0,\pa)\) exists, then
\(\MI_{\mathrm{log}}(\pi_{\mathcal D},\pa)\) exists and
\[
\MI_{\mathrm{log}}(\pi_{\mathcal D},\pa)
=
\MI_{\mathrm{log}}(\pi_0,\pa)
+
\lambda.
\]
\end{theorem}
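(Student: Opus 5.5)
Since $g\asymp1$ near $\pa$, there are $r_1>0$ and constants $0<c\le C<\infty$ with $c\le g\le C$ on $B_{r_1}(\pa)$. Bayes' formula and Assumption~\ref{as:bayes} then give, for $r<r_1$,
\[
\pi_{\mathcal D}(B_r(\pa))=Z_{\mathcal D}^{-1}\int_{B_r(\pa)\cap E}g\,\mathrm d\pi_0\asymp \pi_0(B_r(\pa)\cap E)=A_{E,\pa}(r)\,\pi_0(B_r(\pa)).
\]
Here $\pi_0(B_r(\pa))>0$ for small $r$ because $\MI_{\mathrm{pow}}(\pi_0,\pa)>0$ excludes a local hole. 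The plan is to combine this identity with the assumed asymptotics of $A_{E,\pa}$ and translate the result into Mass Indices. Unlike Theorem~\ref{thm:local-likelihood-scaling}, no annulus decomposition is needed, since the acceptance ratio is already stated at the level of ball masses. The key fact is that all the sets $\mathcal U,\mathcal L$ in Definition~\ref{define:MI} are unchanged when $p(B_r(\pa))$ is replaced by any function comparable to it up to constants.

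\textbf{Power index.} Write $a=d/\MI_{\mathrm{pow}}(\pi_0,\pa)\in(0,\infty)$. By definition, for every $\epsilon>0$,
\[
\Omega\bigl(r^{a+\epsilon}\bigr)\le\pi_0(B_r(\pa))\le O\bigl(r^{a-\epsilon}\bigr).
\]
Multiplying by $A_{E,\pa}(r)\asymp r^\delta(-\log r)^\lambda$, and using that $(-\log r)^\lambda$ lies between $r^{\epsilon}$ and $r^{-\epsilon}$ for small $r$, we get
\[
\Omega\bigl(r^{a+\delta+2\epsilon}\bigr)\le\pi_{\mathcal D}(B_r(\pa))\le O\bigl(r^{a+\delta-2\epsilon}\bigr).
\]
It follows that $\inf\mathcal U_{\mathrm{pow}}\le 1/(a+\delta-2\epsilon)$ and $\sup\mathcal L_{\mathrm{pow}}\ge 1/(a+\delta+2\epsilon)$. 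We also need the reverse inequality $\sup\mathcal L\le\inf\mathcal U$. If $\eta\in\mathcal L$ and $\eta'\in\mathcal U$ with $\eta>\eta'$, then $r^{1/\eta}\lesssim r^{1/\eta'}$, which fails as $r\to0$; hence $\sup\mathcal L\le\inf\mathcal U$ always. Letting $\epsilon\to0$ shows that both indices equal $d/(a+\delta)$. The hypothesis $\delta\ge0$ keeps $a+\delta>0$, and rearranging gives $\MI^{-1}_{\mathrm{pow}}(\pi_{\mathcal D},\pa)=\MI^{-1}_{\mathrm{pow}}(\pi_0,\pa)+d^{-1}\delta$. The case condition $\delta=0,\lambda\le0$ is used only to ensure consistency with $A\le1$; it does not enter this argument.

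\textbf{Logarithmic index.} Set $b=\MI_{\mathrm{log}}(\pi_0,\pa)$. For each $\epsilon>0$,
\[
\Omega\bigl(r^a(-\log r)^{b-\epsilon}\bigr)\le \pi_0(B_r(\pa))\le O\bigl(r^a(-\log r)^{b+\epsilon}\bigr).
\]
Multiplying by $A_{E,\pa}$ gives the same bounds for $\pi_{\mathcal D}(B_r(\pa))$ with $r^{a+\delta}$ and exponents $b+\lambda\mp\epsilon$. The posterior power exponent is $a_{\pi_{\mathcal D},\pa}=a+\delta$ by the previous step, so these are exactly the gauges appearing in Definition~\ref{define:MI}. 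As before, the sets satisfy $\sup\mathcal L_{\log}\le\inf\mathcal U_{\log}$, by comparing $(-\log r)^\eta$ for different $\eta$. Letting $\epsilon\to0$ gives $\MI_{\mathrm{log}}(\pi_{\mathcal D},\pa)=b+\lambda$.

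\textbf{Main obstacle.} The only delicate point is bookkeeping on the index sets. The infimum and supremum need not be attained, for instance when $\pi_0(B_r)\asymp r^a\log(1/r)$. For this reason the argument works throughout with $\epsilon$-slack rather than exact exponents, and relies on the order relation $\sup\mathcal L\le\inf\mathcal U$ to pinch the upper and lower indices together. I would verify this relation as a short preliminary remark before carrying out the two steps above.
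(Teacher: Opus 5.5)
Your proposal is correct and follows the paper's route. You use $c\le g\le C$ near $\pa$ and Bayes' formula to get $\pi_{\mathcal D}(B_r(\pa))\asymp A_{E,\pa}(r)\,\pi_0(B_r(\pa))$, then push the prior's $\epsilon$-slack power and logarithmic bounds through $r^\delta(-\log r)^\lambda$. The only difference is that you verify $\sup\mathcal L\le\inf\mathcal U$ and pinch the indices inline, whereas the paper packages that step as its gauge-identification lemma.
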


\textit{Interpretation.} The quantity \(A_{E,\pa}(r)\) measures the fraction of the local prior mass around \(\pa\) that is not removed by the hard constraint \(\mathbf{1}_E\). Since \(g(\bm x)\asymp 1\) near \(\pa\), the smooth part of the likelihood does not change the local mass scale. Hence the only local effect of the hard constraint is the multiplicative factor $ A_{E,\pa}(r) \asymp r^\delta(-\log r)^\lambda. $ Thus the hard constraint adds \(\delta\) to the local power exponent and \(\lambda\) to the local logarithmic exponent.

\textit{Proof sketch.}
The proof is a direct analogue of
Theorem~\ref{thm:local-likelihood-scaling}. Since \(g(\bm x)\asymp 1\) near
\(\pa\), Bayes' formula gives
\[
\pi_{\mathcal D}(B_r(\pa))
\asymp
\pi_0(B_r(\pa)\cap E)
=
A_{E,\pa}(r)\pi_0(B_r(\pa)).
\]
Thus \(A_{E,\pa}(r)\asymp r^\delta(-\log r)^\lambda\) shifts the local power
and logarithmic exponents by \(\delta\) and \(\lambda\), respectively. The
condition on \(\delta\) guarantees that the posterior power exponent remains
positive. The identities follow.

In practice, hard constraints, indicator factors, and parameter-dependent supports are often smoothed for optimisation or sampling. Such softening is not locally innocuous: it can change small-ball mass scales and hence the local mass concentration induced by the hard formulation.

\begin{corollary}[Softening and non-commuting local limits]
\label{cor:softening-noncommuting}
Let \(\{s_\tau\}_{\tau>0}\) be positive continuous functions. Assume that
there exists \(C>0\) such that
$
 0<s_\tau\le C
$
for every \(\tau>0\), and that
\[
 \lim_{\tau\downarrow0}s_\tau(\bm x)=\mathbf 1_E(\bm x)
 \quad\text{for }\pi_0\text{-a.e. }\bm x .
\]
Assume that Assumption~\ref{as:bayes} holds for the hard likelihood
\(L=\mathbf 1_E\). Assume also that
$
 0<\MI_{\mathrm{pow}}(\pi_0,\pa)<\infty .
$
For each \(\tau>0\), denote by \(\pi_\tau\) the posterior obtained from the
softened likelihood \(s_\tau\), and denote by \(\pi_{\mathcal D}\) the hard
posterior obtained from \(\mathbf 1_E\). Then
\[
 \|\pi_\tau-\pi_{\mathcal D}\|_{\mathrm{TV}}\to0,
 \qquad \tau\downarrow0 .
\]
If
\[
 A_{E,\pa}(r)\asymp r^\delta(-\log r)^\lambda,
 \qquad r\downarrow0,
\]
with \(\delta>0\), then
\[
\MI_{\mathrm{pow}}(\pi_{\mathcal D},\pa)
\neq
\lim_{\tau\downarrow0}
\MI_{\mathrm{pow}}(\pi_\tau,\pa).
\]
Furthermore, assume that \(\MI_{\mathrm{log}}(\pi_0,\pa)\) exists. If
\[
 A_{E,\pa}(r)\asymp (-\log r)^\lambda,
 \qquad r\downarrow0,
\]
with \(\lambda<0\), then
\[
\MI_{\mathrm{log}}(\pi_{\mathcal D},\pa)
\neq
\lim_{\tau\downarrow0}
\MI_{\mathrm{log}}(\pi_\tau,\pa).
\]
\end{corollary}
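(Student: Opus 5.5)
The proof has three parts: TV convergence, then the power-index claim, then the log-index claim.

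\textbf{TV convergence.} Write \(Z_\tau:=\int s_\tau\,\mathrm d\pi_0\) and \(Z:=\pi_0(E)\in(0,\infty)\); the latter is positive by Assumption~\ref{as:bayes} for \(L=\mathbf 1_E\). Since \(0<s_\tau\le C\) and \(\pi_0\) is a probability measure, dominated convergence gives
\[
Z_\tau\to Z \quad\text{and}\quad \int|s_\tau-\mathbf 1_E|\,\mathrm d\pi_0\to0 .
\]
Because both posteriors have \(\pi_0\)-densities,
\[
\|\pi_\tau-\pi_{\mathcal D}\|_{\mathrm{TV}}\le\int\Bigl|\frac{s_\tau}{Z_\tau}-\frac{\mathbf 1_E}{Z}\Bigr|\,\mathrm d\pi_0 .
\]
The integrand splits as
\[
\Bigl|\frac{s_\tau}{Z_\tau}-\frac{\mathbf 1_E}{Z}\Bigr|\le\frac{|s_\tau-\mathbf 1_E|}{Z_\tau}+\mathbf 1_E\,\Bigl|\frac1{Z_\tau}-\frac1Z\Bigr|,
\]
and both resulting terms tend to \(0\) as \(\tau\downarrow0\).

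\textbf{Power index for each fixed \(\tau\).} By Theorem~\ref{thm:acceptance}, the hard posterior has
\[
\MI^{-1}_{\mathrm{pow}}(\pi_{\mathcal D},\pa)=\MI^{-1}_{\mathrm{pow}}(\pi_0,\pa)+\delta/d>\MI^{-1}_{\mathrm{pow}}(\pi_0,\pa),
\]
so \(\MI_{\mathrm{pow}}(\pi_{\mathcal D},\pa)<\MI_{\mathrm{pow}}(\pi_0,\pa)\). For fixed \(\tau\), \(s_\tau\) is continuous and positive at \(\pa\), hence \(s_\tau(\bm x)\asymp1\) as \(\bm x\to\pa\). This is the regular case \(\gamma=\beta=0\) of Theorem~\ref{thm:local-likelihood-scaling}. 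Its hypotheses hold: \(Z_\tau\in(0,\infty)\) because \(0<s_\tau\le C\), and Assumption~\ref{as:existence} holds because \(\MI_{\mathrm{pow}}(\pi_0,\pa)\in(0,\infty)\). The theorem therefore gives
\[
\MI_{\mathrm{pow}}(\pi_\tau,\pa)=\MI_{\mathrm{pow}}(\pi_0,\pa)\quad\text{for every }\tau .
\]
The limit as \(\tau\downarrow0\) is then the constant \(\MI_{\mathrm{pow}}(\pi_0,\pa)\), which differs from \(\MI_{\mathrm{pow}}(\pi_{\mathcal D},\pa)\).

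\textbf{Log index.} Here \(A_{E,\pa}(r)\asymp(-\log r)^\lambda\) with \(\lambda<0\), i.e.\ \(\delta=0,\lambda\le0\), so Theorem~\ref{thm:acceptance} applies. It gives
\[
\MI_{\mathrm{pow}}(\pi_{\mathcal D},\pa)=\MI_{\mathrm{pow}}(\pi_0,\pa),\qquad \MI_{\mathrm{log}}(\pi_{\mathcal D},\pa)=\MI_{\mathrm{log}}(\pi_0,\pa)+\lambda .
\]
Theorem~\ref{thm:local-likelihood-scaling} with \(\beta=0\) gives \(\MI_{\mathrm{log}}(\pi_\tau,\pa)=\MI_{\mathrm{log}}(\pi_0,\pa)\) for every \(\tau\). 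The two values differ by \(\lambda\neq0\).

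\textbf{Main obstacle.} The only delicate point is the local regularity of \(s_\tau\) at \(\pa\), which is what places each \(\pi_\tau\) in the regular case of Theorem~\ref{thm:local-likelihood-scaling}. The upper bound \(s_\tau\le C\) is global. The lower bound \(s_\tau\ge s_\tau(\pa)/2>0\) on a small ball follows from continuity and strict positivity of \(s_\tau\) at the single point \(\pa\). That ball may shrink as \(\tau\downarrow0\), but this is harmless: the Mass Index is a purely asymptotic \(r\downarrow0\) quantity, and each \(\tau\) is handled separately before taking the limit. This is exactly why the limits in \(\tau\) and \(r\) fail to commute.
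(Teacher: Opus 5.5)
Your proof is correct and follows the paper's argument. You use dominated convergence for $Z_\tau\to Z_E$ with the $L^1(\pi_0)$ bound on the posterior densities for the TV claim, and for each fixed $\tau$ you use local two-sided bounds on $s_\tau$ near $\pa$ (giving $\pi_\tau(B_r(\pa))\asymp\pi_0(B_r(\pa))$) together with Theorem~\ref{thm:acceptance} for the hard posterior; the only cosmetic differences are that you pass the fixed-$\tau$ step through the regular case $\gamma=\beta=0$ of Theorem~\ref{thm:local-likelihood-scaling} rather than the direct comparison, and you attach the normalisers to the other term in the TV split.
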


\textit{Interpretation.} This corollary shows that the local limit \(r\to0\) and the zero-temperature limit \(\tau\downarrow0\) need not commute. For fixed \(\tau>0\), the soft factor \(s_\tau\) is locally positive near \(\pa\), so the local MI is preserved. In the hard limit, the constraint may retain only a fraction \(A_{E,\pa}(r)\asymp r^\delta(-\log r)^\lambda\) of the local prior mass, and therefore changes the local mass scale unless \(\delta=\lambda=0\). Thus softening is not locally innocuous: it may fail to reproduce the local mass-concentration structure of the hard-constrained model.

\subsection{Local RE-KL analysis for Mass Indices}

MI provides a way to characterise the local mass of a probability distribution. The local RE-KL bounds below give local bounds and sufficient conditions for comparing the mass behaviour of two measures.

\paragraph{Local RE-KL bounds.} We first show how the absolutely continuous component and the singular component jointly contribute to the absolute error.

\begin{theorem}[Absolute error bound]
\label{thm:local mass}
For any measurable set \(E\),
\[
\bigl|q(E)-p(E)\bigr|
\le
2\sqrt{C_\alpha D_\alpha^E(q\|p)},
\]
where
$
C_\alpha=\max\left\{1,{\alpha}^{-1}({1-\alpha})\right\}.
$
\end{theorem}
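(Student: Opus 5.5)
The plan is to reduce the bound to a Hellinger-type estimate on $E$ and then compare that Hellinger quantity with $D_\alpha^E(q\|p)$ through a pointwise inequality for the generator $f_\alpha$.

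\textbf{Step 1: write both measures against a common reference.} Take the Lebesgue decomposition $q=q_{\ll p}+q_{\perp p}$ from Lemma~\ref{lem:decom} and put $h:=\mathrm dq_{\ll p}/\mathrm dp$. Use the reference measure $\nu:=p+q_{\perp p}$. Choose a set $S$ with $p(S)=0$ carrying $q_{\perp p}$. With respect to $\nu$ the densities are $\mathrm dp/\mathrm d\nu=\mathbf 1_{S^c}$ and $\mathrm dq/\mathrm d\nu=h\mathbf 1_{S^c}+\mathbf 1_S$. Define the localised squared Hellinger quantity
\[
H_E:=\int_E\Bigl(\sqrt{\tfrac{\mathrm dq}{\mathrm d\nu}}-\sqrt{\tfrac{\mathrm dp}{\mathrm d\nu}}\Bigr)^2\mathrm d\nu
=\int_E(\sqrt h-1)^2\,\mathrm dp+q_{\perp p}(E).
\]

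\textbf{Step 2: bound the mass difference by $H_E$.} Factor the difference of densities as $(\sqrt{q'}-\sqrt{p'})(\sqrt{q'}+\sqrt{p'})$, where $q'$ and $p'$ are the $\nu$-densities, and apply Cauchy--Schwarz on $E$:
\[
|q(E)-p(E)|\le\sqrt{H_E}\,\Bigl(\int_E(\sqrt{q'}+\sqrt{p'})^2\,\mathrm d\nu\Bigr)^{1/2}\le\sqrt{H_E}\,\sqrt{2\bigl(q(E)+p(E)\bigr)}\le 2\sqrt{H_E}.
\]
The middle inequality uses $(a+b)^2\le 2(a^2+b^2)$, and the last uses $q(E)+p(E)\le 2$. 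Working with $\nu$ treats the singular part inside the same Cauchy--Schwarz step. This avoids adding a separate term $q_{\perp p}(E)$, which would spoil the constant $2$.

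\textbf{Step 3: show $H_E\le C_\alpha D_\alpha^E(q\|p)$.} This splits into two comparisons.
\begin{itemize}
\item The singular part is immediate: $q_{\perp p}(E)=\frac{1-\alpha}{\alpha}\cdot\frac{\alpha}{1-\alpha}q_{\perp p}(E)\le C_\alpha\cdot\frac{\alpha}{1-\alpha}q_{\perp p}(E)$, because $C_\alpha\ge(1-\alpha)/\alpha$.
\item The absolutely continuous part needs the pointwise inequality
\[
(\sqrt x-1)^2\le C_\alpha f_\alpha(x),\qquad x\ge0,
\]
where $f_\alpha(x)=\frac{\alpha x-x^\alpha+1-\alpha}{1-\alpha}\ge0$ by concavity of $x^\alpha$. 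Integrating it over $E$ against $p$ gives the bound.
\end{itemize}
Combining Steps 2 and 3 yields $|q(E)-p(E)|\le 2\sqrt{C_\alpha D_\alpha^E(q\|p)}$.

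\textbf{Main obstacle: the pointwise inequality.} Substitute $x=t^2$ and study
\[
g(t)=C_\alpha\,\frac{\alpha t^2-t^{2\alpha}+1-\alpha}{1-\alpha}-(t-1)^2\quad\text{on }[0,\infty).
\]
I would first check the regimes that must hold.
\begin{itemize}
\item At $t=0$: $g(0)=C_\alpha-1\ge0$.
\item As $t\to\infty$: the leading coefficient of $t^2$ is $C_\alpha\alpha/(1-\alpha)-1\ge0$.
\item Near $t=1$: $f_\alpha(x)\approx\frac{\alpha}{2}(x-1)^2$ while $(\sqrt x-1)^2\approx\frac14(x-1)^2$, so one needs $C_\alpha\ge 1/(2\alpha)$.
\end{itemize}
The last condition does \emph{not} always hold for the stated $C_\alpha$. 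When $\alpha\le 1/2$ it does, since $(1-\alpha)/\alpha\ge 1/(2\alpha)$. When $\alpha\in(1/2,1)$ it fails, since then $C_\alpha=1<1/(2\alpha)$. So the inequality with the stated constant holds only for $\alpha\le1/2$.

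For $\alpha\le 1/2$ I would establish $g\ge0$ globally. First show $g(1)=g'(1)=0$. Then control $g''(t)=\frac{2C_\alpha}{1-\alpha}\bigl(\alpha-\alpha(2\alpha-1)t^{2\alpha-2}\bigr)-2$, which is nonnegative for $\alpha\le1/2$ given $C_\alpha\alpha\ge1-\alpha$. This gives convexity and hence $g\ge0$.

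For $\alpha\in(1/2,1)$ I expect the argument to need either the constant $\max\{1,\tfrac{1-\alpha}{\alpha},\tfrac{1}{2\alpha}\}$ or a corrected statement. I would verify numerically before finalising that case.
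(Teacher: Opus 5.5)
Your Steps 1--3 follow the paper's route. You use the local Hellinger quantity $H_E=\int_E(\sqrt h-1)^2\,\mathrm dp+q_{\perp p}(E)$, apply Cauchy--Schwarz to get $|q(E)-p(E)|\le 2\sqrt{H_E}$, and then compare termwise with $D_\alpha^E(q\|p)$. Your common-dominating-measure version of the Cauchy--Schwarz step is only a cosmetic variant of the paper's bound $\int_E\sqrt R\,\mathrm dp\le\sqrt{q(E)p(E)}$. Your convexity proof of the pointwise inequality for $\alpha\le 1/2$ is also correct.

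\textbf{The gap.} The case $\alpha\in(1/2,1)$ is left unproved, and you assert that the inequality fails there. That assertion comes from an arithmetic slip. For $\alpha>1/2$ we have $2\alpha>1$, so $1/(2\alpha)<1=C_\alpha$. The second-order condition at $t=1$ is therefore satisfied, not violated. In fact $1/(2\alpha)\le\max\{1,(1-\alpha)/\alpha\}$ for every $\alpha\in(0,1)$, so your proposed ``corrected'' constant coincides with $C_\alpha$. The theorem is true as stated.

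\textbf{Why your argument does not carry over.} Your convexity argument cannot be reused for $\alpha>1/2$. With $C_\alpha=1$ and $2\alpha-1>0$, the term $-\alpha(2\alpha-1)t^{2\alpha-2}$ makes $g''(t)\to-\infty$ as $t\downarrow0$, so $g$ is not convex on $[0,\infty)$.

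\textbf{The fix.} Multiply by $1-\alpha$ instead:
\[
(1-\alpha)\,g(t)=(2\alpha-1)t^2+2(1-\alpha)t-t^{2\alpha}.
\]
The right-hand side is nonnegative by weighted AM--GM. The weights $2\alpha-1$ and $2(1-\alpha)$ are nonnegative and sum to one, so
\[
(2\alpha-1)t^2+2(1-\alpha)t\ge (t^2)^{2\alpha-1}\,t^{2(1-\alpha)}=t^{2\alpha}.
\]
This is exactly how the paper handles $\alpha\in[1/2,1)$. With this piece added, your proof is complete for all $\alpha\in(0,1)$.
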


\textit{Interpretation.}
Theorem~\ref{thm:local mass} gives a local absolute error control for the mass assigned to a measurable region \(E\). 
Small local RE--KL discrepancy on \(E\) forces \(q(E)\) and \(p(E)\) to be close in additive mass. 
Importantly, the bound remains meaningful even when \(q\) has a \(p\)-singular component: singular mass does not make the right-hand side automatically infinite, but is charged through the finite recession penalty in \(D_\alpha^E(q\|p)\). 

\textit{Proof sketch.}
Decompose \(q\) into its \(p\)-absolutely continuous and \(p\)-singular parts. 
For the former, rewrite the local mass error in Hellinger form and apply Cauchy--Schwarz. The RE--KL term dominates the resulting quantity up to \(C_\alpha\). 
The singular part is already charged by the recession term in \(D_\alpha^E(q\|p)\). 
Combining these two estimates yields the bound.

Theorem~\ref{thm:local mass} leads to Corollary~\ref{cor:local mass} directly.

\begin{corollary}
\label{cor:local mass}
Assume that for some \(\delta>0\),
$
D_{\alpha}^{B_r(\pa)}(q\|p)=O(r^{\delta}),\,r\to 0.
$
We have
\[
\left|q(B_{r}(\pa))-p(B_{r}(\pa))\right|
=O(r^{\delta/2}).
\]
\end{corollary}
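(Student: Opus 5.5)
This is a direct specialisation of Theorem~\ref{thm:local mass} to the sets \(E=B_r(\pa)\). For each fixed \(r>0\), Theorem~\ref{thm:local mass} gives
\[
\bigl|q(B_r(\pa))-p(B_r(\pa))\bigr|
\le 2\sqrt{C_\alpha\,D_\alpha^{B_r(\pa)}(q\|p)} .
\]
The constant \(C_\alpha=\max\{1,\alpha^{-1}(1-\alpha)\}\) depends only on \(\alpha\), which is fixed, and not on \(r\). The only thing left to do is to insert the assumed decay rate of the local divergence.

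By hypothesis there are constants \(K>0\) and \(r_1>0\) such that \(D_\alpha^{B_r(\pa)}(q\|p)\le K r^{\delta}\) for all \(0<r<r_1\). Before taking a square root, note that \(D_\alpha^{B_r(\pa)}(q\|p)\ge0\). This holds because \(f_\alpha\ge0\) on \([0,\infty)\): it is convex, since \(f_\alpha''(x)=-\alpha x^{\alpha-2}>0\) for \(\alpha\in(0,1)\), and it satisfies \(f_\alpha(1)=0\) and \(f_\alpha'(1)=0\). The singular penalty is nonnegative as well. Since \(t\mapsto\sqrt t\) is monotone on \([0,\infty)\), we obtain, for \(0<r<r_1\),
\[
\bigl|q(B_r(\pa))-p(B_r(\pa))\bigr|
\le 2\sqrt{C_\alpha K}\; r^{\delta/2}.
\]
This is precisely \(|q(B_r(\pa))-p(B_r(\pa))|=O(r^{\delta/2})\) as \(r\to0\), with implied constant \(2\sqrt{C_\alpha K}\).

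There is essentially no obstacle here. The only points to check are that \(C_\alpha\) is uniform in \(r\) and that the nonnegativity of RE-KL makes the square root legitimate, and both follow from the definitions. All of the substantive work is contained in Theorem~\ref{thm:local mass}, which we take as given.
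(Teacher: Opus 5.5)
Your proof is correct and is essentially the paper's argument: apply Theorem~\ref{thm:local mass} with $E=B_r(\pa)$ and insert the assumed $O(r^{\delta})$ bound, using that $C_\alpha$ does not depend on $r$. One small slip in your nonnegativity remark: the second derivative is $f_\alpha''(x)=\alpha x^{\alpha-2}$, not $-\alpha x^{\alpha-2}$; it is still positive, so your convexity conclusion and the nonnegativity of the local divergence are unaffected.
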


We next derive a relative bound for the local mass ratio \(q(E)/p(E)\).

\begin{theorem}[Relative error bound]
\label{thm:local multiplicative}
Denote the normalised RE-KL divergence as
$\overline{D}_\alpha^E(q\|p):={D_{\alpha}^E(q\|p)}/{p(E)}$ whenever \(p(E)>0\). Then
\[
f_{\alpha}\!\left(\frac{q(E)}{p(E)}\right)
\le
\overline{D}_\alpha^E(q\|p) .
\]
Furthermore, we can obtain a relative error bound:
\[
\left|1-\frac{q(E)}{p(E)}\right|
\le\,\frac{2}{\alpha}\sqrt{{
\overline{D}_\alpha^E(q\|p)}}\sqrt{{
\overline{D}_\alpha^E(q\|p)}+2}.
\]
\end{theorem}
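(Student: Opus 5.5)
The first inequality is a Jensen argument. Write $q=q_{\ll p}+q_{\perp p}$ and $h:=\mathrm dq_{\ll p}/\mathrm dp$, and let $p_E:=p(\cdot\cap E)/p(E)$ be the normalised restriction. The generator $f_\alpha$ is convex on $[0,\infty)$, since
\[
f_\alpha''(x)=\frac{\alpha(\alpha-1)x^{\alpha-2}}{\alpha-1}=\alpha x^{\alpha-2}>0 .
\]
Jensen's inequality under $p_E$ then gives
\[
\frac{1}{p(E)}\int_E f_\alpha(h)\,\mathrm dp\;\ge\; f_\alpha\!\left(\frac{q_{\ll p}(E)}{p(E)}\right).
\]

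For the singular part I will use the recession slope $\lim_{x\to\infty}f_\alpha(x)/x=\alpha/(1-\alpha)$. Concavity of $x^\alpha$ gives
\[
f_\alpha(x+y)-f_\alpha(x)=\frac{(x+y)^\alpha-x^\alpha}{\alpha-1}+\frac{\alpha y}{1-\alpha}\le \frac{\alpha}{1-\alpha}\,y \qquad (y\ge 0).
\]
Set $u=q_{\ll p}(E)/p(E)$ and $s=q_{\perp p}(E)/p(E)$. Then
\[
f_\alpha(u+s)\le f_\alpha(u)+\frac{\alpha}{1-\alpha}\,s\le \overline D_\alpha^E(q\|p),
\]
because $q_{\perp p}(E)$ enters $D_\alpha^E$ with exactly the penalty $\tfrac{\alpha}{1-\alpha}q_{\perp p}(E)$. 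Since $u+s=q(E)/p(E)$, this proves $f_\alpha(q(E)/p(E))\le\overline D_\alpha^E(q\|p)$.

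The relative bound comes from inverting a quadratic lower bound on $f_\alpha$. Put $t=q(E)/p(E)$ and $D=\overline D_\alpha^E(q\|p)$; it suffices to show that $f_\alpha(t)\le D$ forces $|1-t|\le \tfrac{2}{\alpha}\sqrt{D(D+2)}$. I will use the substitution $t=x^2$. For $\alpha=1/2$ we have $f_{1/2}(t)=(\sqrt t-1)^2$, and for general $\alpha$ I aim for the comparison
\[
f_\alpha(t)\;\ge\;\tfrac{\alpha}{2}\,(\sqrt t-1)^2,
\]
that is, $D_\alpha$ dominates a Hellinger-type quantity. To prove it, set $g(x)=f_\alpha(x^2)-\tfrac{\alpha}{2}(x-1)^2$. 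Then $g(1)=g'(1)=0$, and I will check convexity or a direct sign condition on $g$. If this constant fails, I will fall back to a case split $t\le 1$ versus $t\ge1$, using $f_\alpha(t)\ge \tfrac{\alpha}{2}(1-t)^2$ on $[0,1]$ (from $f_\alpha''\ge\alpha$ there), and linear growth with slope $\tfrac{\alpha}{1-\alpha}$ minus a constant for $t\ge 1$.

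Given the comparison, $(\sqrt t-1)^2\le 2D/\alpha$, so $|\sqrt t-1|\le\sqrt{2D/\alpha}$. Then
\[
|1-t|=|\sqrt t-1|\,(\sqrt t+1)\le |\sqrt t-1|\,\bigl(|\sqrt t-1|+2\bigr)\le \sqrt{2D/\alpha}\,\bigl(\sqrt{2D/\alpha}+2\bigr).
\]
Each factor is then absorbed into the stated form using $\alpha<1$ and $\sqrt a+\sqrt b\le\sqrt{2(a+b)}$; I expect this bookkeeping to reproduce $\tfrac{2}{\alpha}\sqrt D\sqrt{D+2}$, possibly with slack.

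The main obstacle is the uniform quadratic-in-$\sqrt t$ lower bound on $f_\alpha$ with a constant of order $\alpha$ valid for all $t\ge 0$, including large $t$, where $f_\alpha$ grows only linearly. The $\sqrt t$ parametrisation is the choice that keeps this possible.
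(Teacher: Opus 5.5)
Your first inequality is correct and is the paper's argument: Jensen for the convex $f_\alpha$ under the normalised restriction of $p$ to $E$, plus $f_\alpha(u+s)\le f_\alpha(u)+\frac{\alpha}{1-\alpha}s$ to absorb the singular part.

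\textbf{The gap.} The relative bound fails at the step you call bookkeeping, because the constant $\alpha/2$ in your Hellinger comparison is too weak. If all you know is $(\sqrt t-1)^2\le 2D/\alpha$, the worst case is $\sqrt t=1+\sqrt{2D/\alpha}$. Your chain therefore yields exactly $|1-t|\le 2D/\alpha+2\sqrt{2D/\alpha}$, and nothing better. After dividing by $2/\alpha$, this is at most $\frac{2}{\alpha}\sqrt{D}\sqrt{D+2}$ if and only if $\sqrt D+\sqrt{2\alpha}\le\sqrt{D+2}$, i.e.\ $\sqrt{2\alpha D}\le 1-\alpha$. So it fails for every $D>(1-\alpha)^2/(2\alpha)$; for example, $\alpha=1/2$ and $D=1$ give $8>4\sqrt3$. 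The inequality $\sqrt a+\sqrt b\le\sqrt{2(a+b)}$ only recovers the stated form up to an extra factor $\sqrt2$.

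Your fallback does not repair this either. For $t\ge1$ we have $f_\alpha(t)-\frac{\alpha}{1-\alpha}t=\frac{1-\alpha-t^{\alpha}}{1-\alpha}\to-\infty$, so there is no lower bound of slope $\frac{\alpha}{1-\alpha}$ minus a constant. Moreover, a purely linear bound cannot give the $\sqrt D$ behaviour near $t=1$ that the target requires. (Also, $g$ is not convex for $\alpha>1/2$, since $g''(x)\to-\infty$ as $x\downarrow0$, although $g\ge0$ does hold.)

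\textbf{The fix.} You need a sharper comparison constant. The scalar inequality $(\sqrt x-1)^2\le C_\alpha f_\alpha(x)$ proved for Theorem~\ref{thm:local mass} (weighted AM--GM for $\alpha\ge1/2$, a monotonicity argument for $\alpha\le1/2$) gives
\[
f_\alpha(t)\ge\min\Bigl\{1,\frac{\alpha}{1-\alpha}\Bigr\}(\sqrt t-1)^2\ge\alpha(\sqrt t-1)^2 .
\]
With $\alpha$ in place of $\alpha/2$, your chain gives $|1-t|\le D/\alpha+2\sqrt{D/\alpha}$. Since $\alpha<1$, it then suffices that $D+2\sqrt D\le 2\sqrt{D^2+2D}$, which holds because $4(D^2+2D)-(D+2\sqrt D)^2=D(3D-4\sqrt D+4)\ge0$.

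\textbf{Comparison with the paper.} Once repaired, your route is valid but differs from the paper's, which never passes through $\sqrt t$. The paper proves directly that $f_\alpha(y)\ge\frac{\alpha|y-1|^2}{2(1+|y-1|)}$, using Taylor's formula with integral remainder: $f_\alpha''\ge\alpha$ on $[0,1]$, and $f_\alpha(1+s)\ge\alpha\{s-\log(1+s)\}\ge\frac{\alpha s^2}{2(1+s)}$ for $s\ge0$. This single expression is quadratic near $1$ and linear at infinity, and the paper then solves the resulting quadratic inequality in $|1-y|$. Your version has the advantage of reusing the Hellinger comparison already needed for the absolute bound. As the failure at $\alpha/2$ shows, however, it only closes once the comparison constant is large enough.
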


\textit{Interpretation.}
Theorem~\ref{thm:local multiplicative} upgrades absolute error control to relative error control. This is needed for MI: when \(p(E)\) is itself small, absolute error is not enough to prevent \(q(E)\) from deviating too much in relative terms, whereas controlling \(q(E)/p(E)\) prevents asymptotic local under- or over-coverage.

\textit{Proof sketch.}
Jensen's inequality
controls the absolutely continuous part, while the recession slope of
\(f_\alpha\) absorbs the singular part. Then a scalar lower
bound for \(f_\alpha\) near its minimum at \(1\) converts this control into a
quadratic inequality for \(|1-q(E)/p(E)|\), whose solution gives the displayed
relative error bound.

Theorem~\ref{thm:local multiplicative} leads to Corollary~\ref{cor:local multiplicative} directly.

\begin{corollary}
\label{cor:local multiplicative} Suppose $p(B_r(\pa)) > 0$ {for all sufficiently small } $r$.
Assume that for some \(\delta>0\),
$
\overline D_{\alpha}^{B_r(\pa)}(q\|p)=O(r^{\delta}),\,r\to 0.
$
We have
\[
\left|1-\frac{q(B_{r}(\pa))}{p(B_{r}(\pa))}\right|
=O(r^{\delta/2}).
\]
\end{corollary}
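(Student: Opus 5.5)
The corollary follows by plugging the hypothesis into the relative error bound of Theorem~\ref{thm:local multiplicative} with \(E=B_r(\pa)\). Since \(p(B_r(\pa))>0\) for all sufficiently small \(r\), the normalised divergence \(\overline D_\alpha^{B_r(\pa)}(q\|p)\) is well defined there. Write \(\epsilon(r):=\overline D_\alpha^{B_r(\pa)}(q\|p)\). By hypothesis there exist \(C>0\) and \(r_1>0\) such that \(0\le\epsilon(r)\le Cr^\delta\) for all \(0<r<r_1\). Nonnegativity holds because \(f_\alpha\ge0\) on \([0,\infty)\) (it is convex with minimum \(f_\alpha(1)=0\)) and the singular penalty is nonnegative.

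Applying Theorem~\ref{thm:local multiplicative} gives
\[
\left|1-\frac{q(B_r(\pa))}{p(B_r(\pa))}\right|\le \frac{2}{\alpha}\sqrt{\epsilon(r)}\sqrt{\epsilon(r)+2}.
\]
Shrink \(r_1\) if necessary so that \(Cr^\delta\le1\) for \(r<r_1\). Then \(\epsilon(r)+2\le3\), and the right-hand side is at most \(\frac{2\sqrt3}{\alpha}\sqrt{C}\,r^{\delta/2}\). The constant \(\alpha\) is fixed and does not depend on \(r\), so this is exactly \(O(r^{\delta/2})\).

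The only point requiring attention is bounding the second square-root factor uniformly, which uses \(\epsilon(r)\to0\); there is no genuine obstacle.
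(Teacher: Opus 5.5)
Your proposal is correct and follows essentially the same route as the paper: apply Theorem~\ref{thm:local multiplicative} with \(E=B_r(\pa)\), and use \(\overline D_\alpha^{B_r(\pa)}(q\|p)\to0\) to bound the factor \(\sqrt{\overline D_\alpha^{B_r(\pa)}(q\|p)+2}\) uniformly. This gives \(O\bigl(\overline D_\alpha^{B_r(\pa)}(q\|p)^{1/2}\bigr)=O(r^{\delta/2})\). Your explicit remarks on nonnegativity and on the constant \(2\sqrt{3C}/\alpha\) are fine additions.
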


\paragraph{Directional local RE--KL implications for Mass Indices.}
We next use Theorem~\ref{thm:local multiplicative} to compare the local
small-ball scales of two measures. In variational approximation,
\(p=\pi_{\mathcal D}\) denotes the posterior and \(q\) denotes the approximation.
The comparison of interest is the one-sided relation
\(\MI_{\mathrm{pow}}(q,\pa)\ge\MI_{\mathrm{pow}}(p,\pa)\). The following result
gives directional sufficient conditions for this relation, and for equality in
the stronger case.

\begin{theorem}[Directional sufficient conditions for Power Mass Index comparison]
\label{thm:one-sided-MI-pow}
Assume \(p(B_r(\pa))>0\) and \(q(B_r(\pa))>0\) for all sufficiently small
\(r>0\). Suppose Assumption~\ref{as:existence} holds with \(p\) and \(q\).

\textnormal{(i)} If
$
\limsup_{r\to0}{\overline{D}_\alpha^{B_r(\pa)}(q\|p)}<1,
$

then \(\MI_{\mathrm{pow}}(q,\pa)=\MI_{\mathrm{pow}}(p,\pa)\). If
\(\MI_{\mathrm{log}}(p,\pa)\) exists, then
\(\MI_{\mathrm{log}}(q,\pa)=\MI_{\mathrm{log}}(p,\pa)\).

\textnormal{(ii)} If
$
\limsup_{r\to0}{\overline{D}_\alpha^{B_r(\pa)}(p\|q)}<\infty,
$
then \(\MI_{\mathrm{pow}}(q,\pa)\ge\MI_{\mathrm{pow}}(p,\pa)\). Furthermore, if
\(\MI_{\mathrm{pow}}(q,\pa)=\MI_{\mathrm{pow}}(p,\pa)\), and both
\(\MI_{\mathrm{log}}(p,\pa)\) and \(\MI_{\mathrm{log}}(q,\pa)\) exist, then
\(\MI_{\mathrm{log}}(q,\pa)\ge\MI_{\mathrm{log}}(p,\pa)\).
\end{theorem}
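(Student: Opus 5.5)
The plan is to reduce both parts to two-sided or one-sided bounds on the ratio $R(r):=q(B_r(\pa))/p(B_r(\pa))$, and then read off the Mass Indices from these bounds.

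\textbf{Ratio bounds from the divergence.} For part (i) I apply the first inequality of Theorem~\ref{thm:local multiplicative} with $E=B_r(\pa)$. This gives $f_\alpha(R(r))\le \overline D_\alpha^{B_r(\pa)}(q\|p)\le c<1$ for all sufficiently small $r$. The generator $f_\alpha$ is convex on $[0,\infty)$ with minimum $0$ at $x=1$. It also satisfies $f_\alpha(0)=1$ and $f_\alpha(x)\to\infty$ as $x\to\infty$, because its slope tends to $\alpha/(1-\alpha)>0$. So the sublevel set $\{x: f_\alpha(x)\le c\}$ is a compact interval $[x_-,x_+]$. Here $x_->0$ precisely because $f_\alpha(0)=1>c$. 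This gives $x_-\,p(B_r(\pa))\le q(B_r(\pa))\le x_+\,p(B_r(\pa))$, so $q(B_r(\pa))\asymp p(B_r(\pa))$. This explains the threshold $1$: it is exactly the value $f_\alpha(0)$, and only a threshold below it excludes $R(r)\to 0$.

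For part (ii) I apply the same inequality with the roles of $p$ and $q$ swapped. This gives $f_\alpha(p(B_r)/q(B_r))\le C<\infty$. Since $f_\alpha$ is coercive, this gives only the upper bound $p(B_r)/q(B_r)\le x_+(C)$. The lower bound $x_-$ may be $0$, because $C$ is allowed to be $\ge1$. Hence we obtain only $p(B_r(\pa))=O(q(B_r(\pa)))$.

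\textbf{Transfer to Mass Indices.} The key observation is that the sets $\mathcal U_{\mathrm{pow}},\mathcal L_{\mathrm{pow}},\mathcal U_{\log},\mathcal L_{\log}$ are invariant when the measure's small-ball function changes by a factor in $\asymp$. This is immediate from their definitions through $O$ and $\Omega$. In (i) this invariance gives equal upper and lower power indices for $p$ and $q$, so $\MI_{\mathrm{pow}}(q,\pa)=\MI_{\mathrm{pow}}(p,\pa)$. Because the power indices agree, $a_{q,\pa}=a_{p,\pa}$ and the log sets also coincide, so $\MI_{\log}$ exists for $q$ and equals that of $p$. The degenerate values $0$ and $\infty$ go through the same way.

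In (ii) I use monotonicity instead. Suppose $\eta\in\mathcal L_{\mathrm{pow}}(p)$, so $p(B_r)=\Omega(r^{1/\eta})$. Then $q(B_r)\gtrsim p(B_r)=\Omega(r^{1/\eta})$, hence $\eta\in\mathcal L_{\mathrm{pow}}(q)$. Therefore $\underline\MI_{\mathrm{pow}}(q)\ge\underline\MI_{\mathrm{pow}}(p)$, and by Assumption~\ref{as:existence} this reads $\MI_{\mathrm{pow}}(q,\pa)\ge \MI_{\mathrm{pow}}(p,\pa)$. When the power indices are equal, the exponents $a$ coincide, and the same inclusion $\mathcal L_{\log}(p)\subseteq\mathcal L_{\log}(q)$ gives $\MI_{\log}(q,\pa)=\sup\mathcal L_{\log}(q)\ge \sup\mathcal L_{\log}(p)=\MI_{\log}(p,\pa)$.

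\textbf{Main obstacle.} The only delicate step is the first one: extracting a \emph{strictly positive} lower bound $x_-$ on the ratio in (i). This needs the exact value $f_\alpha(0)=(\alpha-1)/(\alpha-1)=1$, together with strict convexity, so that the sublevel set stays away from $0$ whenever $c<1$. One also has to check that the limsup hypothesis gives a uniform bound $c'<1$ for all small $r$; this follows by choosing $c'$ strictly between the limsup and $1$. The remaining steps are definitional bookkeeping, including the sup/inf conventions for empty sets.
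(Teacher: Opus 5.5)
Your proposal is correct and follows essentially the same route as the paper. Both arguments apply the first inequality of Theorem~\ref{thm:local multiplicative} on $B_r(\pa)$ and use that the sublevel sets of $f_\alpha$ stay away from $0$ and $\infty$ for levels below $f_\alpha(0)=1$, but are only bounded above for arbitrary finite levels. This gives $q(B_r(\pa))\asymp p(B_r(\pa))$ in (i) and $q(B_r(\pa))\gtrsim p(B_r(\pa))$ in (ii), which transfer to the Mass Indices via invariance of the gauge classes and the inclusions $\mathcal L_{\mathrm{pow}}(p)\subseteq\mathcal L_{\mathrm{pow}}(q)$, $\mathcal L_{\mathrm{log}}(p)\subseteq\mathcal L_{\mathrm{log}}(q)$. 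Your remark that in (i) the existence of $\MI_{\mathrm{pow}}(q,\pa)$ already follows from the $\asymp$-invariance, without Assumption~\ref{as:existence} for $q$, slightly sharpens the paper's wording.
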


\textit{Interpretation.}
The two parts describe different strengths of local information. The
\(q\|p\) direction is rigid: a merely finite value of
\(\overline D_\alpha^{B_r(\pa)}(q\|p)\) is not enough at the power scale, and
one needs the subcritical condition
\(\limsup_{r\to0}\overline D_\alpha^{B_r(\pa)}(q\|p)<1\). Once this condition
is imposed, there is no remaining freedom at the leading small-ball scale:
\(q\) and \(p\) have the same power index, and also the same logarithmic index
whenever the latter is defined for \(p\).

The \(p\|q\) direction is weaker but more permissive. Finiteness of
\(\limsup_{r\to0}\overline D_\alpha^{B_r(\pa)}(p\|q)\) is enough only for a
one-sided comparison: it precludes \(q\) from being locally thinner than \(p\)
at the polynomial scale, but it still allows \(q\) to assign more mass near
\(\pa\). Thus this direction leaves room for local over-concentration of the
approximation. This asymmetry is consistent with the usual distinction between
KL directions in variational approximation
\cite{mcnamara_sequential_2024,minka2005divergence,naesseth2020markovian}.
These are conditional local implications, not a characterisation of minimisers of the global variational objective.

\textit{Proof sketch.}
Apply the local RE--KL mass bound to $B_r(\pa)$. In the $q\|p$ direction, the
subcritical bound forces equality of the power scale, and then of the
logarithmic scale. In the $p\|q$ direction, finite normalised divergence only
rules out polynomially smaller $q(B_r(\pa))$, giving the one-sided Power Mass
Index bound. When the power scales coincide, the logarithmic comparison gives
the final claim.

Example~\ref{exp:dir} illustrates that
the two directions encode genuinely different local comparisons. The condition in the \(p\|q\) direction permits the conclusion that \(q\) has at least as large a local power mass index as \(p\), while the reverse condition is stronger here and is not satisfied.

\section{Experiments}
\label{sec:experiments}

We give three small-scale checks of the proposed local quantities. These experiments are not intended as scalable diagnostics for large neural networks, but as controlled sanity checks. For the real-data experiment, we use three UCI binary tasks: Breast Cancer, Iris \(0\) vs \(1\), and Wine \(0\) vs \(1\). Each model is Bayesian logistic regression with four PCA covariates and an intercept, so \(d=5\). We use a Gaussian prior with scale \(2\), approximate the posterior by a Laplace Gaussian, set \(\pa_0\) to the Laplace posterior mean, and estimate Euclidean small-ball masses by Sobol quadrature over \(r\in[0.03,0.45]\). UCI results are averaged over five seeds. Full experimental details are given in Appendix~\ref{app:experiment_details}.

\paragraph{Experiment 1: Synthetic calibration.}
Figure~\ref{fig:synthetic_calibration} shows the basic regimes used in the theory. A regular Gaussian follows the reference law \(p(B_r(\pa_0))\asymp r^d\). The power-law examples \(p(B_r(\pa_0))\asymp r^{d+\beta}\) show how the parameter \(\beta\) changes the local small-ball mass order. The log-singular and spike-slab examples illustrate logarithmic correction and atomic mass. The exact curves used in this calibration are listed in Appendix~\ref{app:synthetic_calibration}.

\begin{figure}[t!]
    \centering
    \includegraphics[width=\columnwidth,height=0.20\textheight,keepaspectratio]{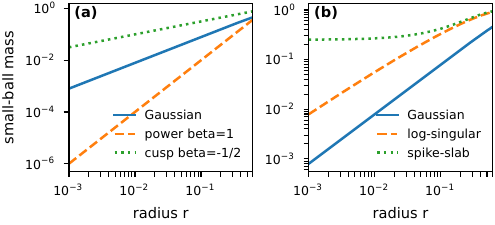}
    \vspace{-0.7em}
    \caption{Synthetic small-ball calibration.}
    \label{fig:synthetic_calibration}
    \vspace{-1.1em}
\end{figure}

\paragraph{Experiment 2: UCI Bayesian sanity check.}
Figure~\ref{fig:uci_aggregate} shows that posterior small-ball mass around the Laplace mean is much larger than prior mass. As shown in Table~\ref{tab:uci_slopes}, at the smallest radii, both prior and posterior slopes remain close to \(d=5\). This matches the regular case: Bayesian updating changes the local density level, but not the leading local power order. The posterior slope decrease at larger radii is a finite-radius effect. Dataset preprocessing, posterior fitting, quadrature, and slope computation are described in Appendix~\ref{app:uci_experiment}.

\begin{figure}[t!]
    \centering
    \includegraphics[width=\columnwidth,height=0.22\textheight,keepaspectratio]{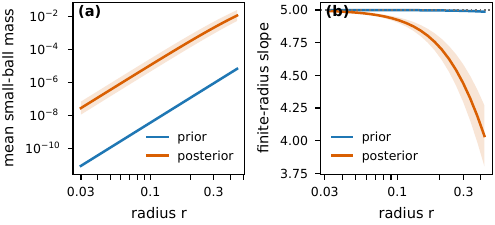}
    \vspace{-0.7em}
    \caption{UCI prior/posterior small-ball masses and finite-radius slopes.}
    \label{fig:uci_aggregate}
    \vspace{-1.0em}
\end{figure}

\begin{table}[t!]
\centering
\small
\setlength{\tabcolsep}{3pt}
\caption{Small-radius slopes fitted over \(r\leq0.06\).}
\label{tab:uci_slopes}
\vspace{-0.6em}
\begin{tabular}{lccc}
\toprule
Dataset & \(n\) & prior & posterior \\
\midrule
Breast Cancer & 398 & \(5.000\pm0.000\) & \(4.969\pm0.005\) \\
Iris \(0\) vs \(1\) & 70 & \(5.000\pm0.000\) & \(4.999\pm0.000\) \\
Wine \(0\) vs \(1\) & 91 & \(5.000\pm0.000\) & \(4.993\pm0.001\) \\
\bottomrule
\end{tabular}
\vspace{-1.1em}
\end{table}

\paragraph{Experiment 3: Directionality of local RE-KL.}
Figure~\ref{fig:rekl_directionality} illustrates that local RE-KL control is directional. On the same shrinking neighbourhoods, the \(p\|q\) direction stays bounded while the \(q\|p\) direction diverges. Thus the order of the two arguments determines which local mismatch is detected. The explicit construction is given in Appendix~\ref{app:rekl_directionality}.

\begin{figure}[t!]
    \centering
    \includegraphics[width=\columnwidth,height=0.19\textheight,keepaspectratio]{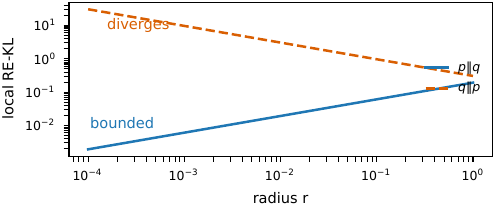}
    \vspace{-0.7em}
    \caption{Directionality of local RE-KL.}
    \label{fig:rekl_directionality}
    \vspace{-1.2em}
\end{figure}

\section{Conclusion}

This paper studied local small-ball mass \(p(B_r(\pa))\) as a measure-level quantity in Bayesian inference. 
The aim was not to propose a new global divergence or a scalable diagnostic method, but to make precise how local mass scales can change under Bayesian updating and approximation.

We made three concrete contributions. 
First, we used the Mass Index to record the leading power order and the first logarithmic correction of local small-ball probabilities. 
Second, we derived local scale identities for Bayesian updating, showing how regular likelihoods preserve the local scale, how power-log likelihood factors shift it, and how parameter-dependent supports act through the surviving fraction of prior mass. 
Third, we used local RE-KL to prove absolute, relative, and directional bounds for comparing small-ball masses, yielding sufficient conditions for equality or one-sided comparison of Mass Indices under the two KL directions.

These results should be read as local asymptotic scale comparisons rather than a characterisation of global variational minimisers. 
The experiments provide finite-radius illustrations of the quantities and regimes appearing in the theory.

\bibliographystyle{plain}
\bibliography{references}

\clearpage
\onecolumn

\appendix

\section{Examples and Counterexamples}
\label{app:examples-counterexamples}

\begin{example}[Oscillating small-ball order]
\label{exp:osc}
This example shows why the upper and lower Power Mass Indices need not
coincide. We construct a probability measure whose small-ball mass behaves
like \(r^a\) along one sequence of radii, but like \(r^b\) along another
sequence of radii, where \(0<a<b\).

For simplicity, take \(\Theta=\mathbb R^d\). The same construction works
whenever \(\Theta\) contains a sufficiently small line segment starting from
\(\pa\). Fix \(0<a<b\), and set
\[
t_n:=\left(\frac{2b}{a}\right)^n,
\qquad n=0,1,2,\ldots .
\]
We define an increasing continuous function \(\phi\) on large positive
values of \(t\) by prescribing its values on the sequence \((t_n)\), and
then interpolating linearly between consecutive points:
\[
\phi(t_{2m})=a t_{2m},
\qquad
\phi(t_{2m+1})=b t_{2m+1}.
\]
The prescribed values are strictly increasing. Indeed,
\[
b t_{2m+1}>a t_{2m},
\qquad
a t_{2m+2}=2b t_{2m+1}.
\]
Hence the piecewise-linear interpolation is increasing.

Now define, for small \(r>0\),
\[
m(r):=\exp\{-\phi(\log(1/r))\}.
\]
As \(r\downarrow0\), we have \(\log(1/r)\to\infty\). Since \(\phi\) is
increasing and tends to infinity, \(m(r)\downarrow0\). Also, as a function
of \(r\), \(m(r)\) is increasing. Thus \(m\) can be used as the distribution
function of radial mass near \(\pa\).

Choose \(\varepsilon>0\) such that \(m(\varepsilon)<1\). By the
Lebesgue--Stieltjes construction, there exists a finite measure \(\nu\) on
\((0,\varepsilon]\) such that
\[
\nu((0,r])=m(r),
\qquad 0<r\le \varepsilon .
\]
Push this measure forward by the map
\[
s\mapsto \pa+s e_1,
\]
where \(e_1\) is the first coordinate vector. Finally, place the remaining
mass \(1-m(\varepsilon)\) outside \(B_\varepsilon(\pa)\). This defines a
probability measure \(p\).

Since \(m\) is continuous, the measure \(\nu\) has no atoms. Therefore no
mass is placed on spheres centred at \(\pa\). Consequently, for every
\(0<r\le \varepsilon\),
\[
p(B_r(\pa))=m(r).
\]

Let
\[
r_n:=e^{-t_n}.
\]
Then \(\log(1/r_n)=t_n\). Along the even subsequence, we obtain
\[
p(B_{r_{2m}}(\pa))
=
\exp\{-\phi(t_{2m})\}
=
\exp\{-a t_{2m}\}
=
r_{2m}^a.
\]
Along the odd subsequence, we obtain
\[
p(B_{r_{2m+1}}(\pa))
=
\exp\{-\phi(t_{2m+1})\}
=
\exp\{-b t_{2m+1}\}
=
r_{2m+1}^b.
\]

Moreover, by construction, the ratio \(\phi(t)/t\) stays between \(a\) and
\(b\) on each interpolation interval. Hence, for all sufficiently small
\(r\),
\[
r^b
\le
p(B_r(\pa))
\le
r^a .
\]
The even subsequence shows that the upper power order cannot be better than
\(a\), while the odd subsequence shows that the lower power order cannot be
better than \(b\). Therefore
\[
\overline{\MI}_{\mathrm{pow}}(p,\pa)=\frac{d}{a},
\qquad
\underline{\MI}_{\mathrm{pow}}(p,\pa)=\frac{d}{b}.
\]
Since \(a<b\), these two values are different. Hence the Power Mass Index at
\(\pa\) does not exist.
\end{example}

\begin{example}[Directionality of local RE--KL conditions]
\label{exp:dir}
This example illustrates that the local RE--KL conditions are directional.
We work at the point \(0\). Let \(p\) and \(q\) be probability measures on
\([-1,1]\) with densities
\[
\frac{\mathrm d p}{\mathrm d x}(x)=\frac12,
\qquad
\frac{\mathrm d q}{\mathrm d x}(x)=\frac14 |x|^{-1/2}.
\]
Both are probability measures, since
\[
\int_{-1}^{1} \frac12 \,\mathrm dx =1,
\qquad
\int_{-1}^{1} \frac14 |x|^{-1/2}\,\mathrm dx =1.
\]
For \(0<r<1\), writing \(B_r=B_r(0)=(-r,r)\), we have
\[
p(B_r)
=
\int_{-r}^{r}\frac12\,\mathrm dx
=
r,
\]
whereas
\[
q(B_r)
=
\int_{-r}^{r}\frac14 |x|^{-1/2}\,\mathrm dx
=
r^{1/2}.
\]
Therefore
\[
\MI_{\mathrm{pow}}(p,0)=1,
\qquad
\MI_{\mathrm{pow}}(q,0)=2.
\]
Thus \(q\) has more local mass near \(0\) than \(p\), in the power-order
sense.

We first consider the direction \(p\|q\). Since
\[
\frac{\mathrm d p}{\mathrm d q}(x)
=
2|x|^{1/2},
\]
the likelihood ratio tends to \(0\) as \(x\to0\). Moreover, on \(B_r\),
\[
0\le \frac{\mathrm d p}{\mathrm d q}(x)\le 2r^{1/2}.
\]
Hence the ratio \(\mathrm d p/\mathrm d q\) converges uniformly to \(0\)
on \(B_r\) as \(r\downarrow0\). By continuity of \(f_\alpha\) at \(0\),
\[
f_\alpha\!\left(\frac{\mathrm d p}{\mathrm d q}(x)\right)
\longrightarrow
f_\alpha(0)
\qquad
\text{uniformly on }B_r.
\]
Consequently,
\[
\overline D_\alpha^{B_r}(p\|q)
:=
\frac{1}{q(B_r)}
\int_{B_r}
f_\alpha\!\left(\frac{\mathrm d p}{\mathrm d q}\right)
\,\mathrm d q
\longrightarrow
f_\alpha(0)
=
1.
\]
Thus the local RE--KL condition in the direction \(p\|q\) is satisfied.
The resulting one-sided conclusion is exactly the expected one:
\[
\MI_{\mathrm{pow}}(q,0)
\ge
\MI_{\mathrm{pow}}(p,0).
\]

The reverse direction behaves differently. In the direction \(q\|p\), we have
\[
\frac{\mathrm d q}{\mathrm d p}(x)
=
\frac12 |x|^{-1/2},
\]
which diverges as \(x\to0\). For large \(t\),
\[
f_\alpha(t)
\sim
\frac{\alpha}{1-\alpha}t.
\]
Therefore, near \(0\),
\[
f_\alpha\!\left(\frac{\mathrm d q}{\mathrm d p}(x)\right)
\asymp
\frac{\mathrm d q}{\mathrm d p}(x).
\]
It follows that
\[
\int_{B_r}
f_\alpha\!\left(\frac{\mathrm d q}{\mathrm d p}\right)
\,\mathrm d p
\asymp
\int_{B_r}
\frac{\mathrm d q}{\mathrm d p}
\,\mathrm d p
=
q(B_r).
\]
After normalisation by \(p(B_r)\), we obtain
\[
\overline D_\alpha^{B_r}(q\|p)
:=
\frac{1}{p(B_r)}
\int_{B_r}
f_\alpha\!\left(\frac{\mathrm d q}{\mathrm d p}\right)
\,\mathrm d p
\asymp
\frac{q(B_r)}{p(B_r)}
=
\frac{r^{1/2}}{r}
=
r^{-1/2}
\longrightarrow
\infty.
\]
Hence the corresponding \(q\|p\) local RE--KL condition fails. This shows
that the two directions are not interchangeable: \(p\|q\) controls a
different local comparison from \(q\|p\).
\end{example}

\section{Proofs}

\begin{lemma}[Gauge identification]
\label{lem:gauge-identification}
Let \(m(r)\ge0\) be a small-ball mass function and let \(\ell(r):=-\log r\).

\textnormal{(i)} Assume \(a>0\). If, for every \(\varepsilon>0\),
\[
 m(r)=O(r^{a-\varepsilon}),
 \qquad
 m(r)=\Omega(r^{a+\varepsilon}),
 \qquad r\downarrow0,
\]
then the Power Mass Index exists and equals \(d/a\).

\textnormal{(ii)} Assume that the power exponent is \(a>0\). If, for some
\(b\in\mathbb R\) and every \(\varepsilon>0\),
\[
 m(r)=O\!\left(r^a\ell(r)^{b+\varepsilon}\right),
 \qquad
 m(r)=\Omega\!\left(r^a\ell(r)^{b-\varepsilon}\right),
 \qquad r\downarrow0,
\]
then the Logarithmic Mass Index exists and equals \(b\).
\end{lemma}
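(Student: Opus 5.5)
The plan is to unwind Definition~\ref{define:MI} and show directly that the relevant index sets have the right infimum and supremum. The key observation is that the hypotheses exhibit gauges in the index sets arbitrarily close to the target, and a monotonicity argument excludes anything beyond it.

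For part (i), I will translate the exponents into the parametrisation by \(\eta\). The condition \(m(r)=O(r^{a-\varepsilon})\) says \(\eta_\varepsilon:=1/(a-\varepsilon)\in\mathcal U_{\mathrm{pow}}\) for every \(\varepsilon\in(0,a)\). Hence
\[
\inf\mathcal U_{\mathrm{pow}}\le \inf_{\varepsilon}1/(a-\varepsilon)=1/a .
\]
Similarly \(1/(a+\varepsilon)\in\mathcal L_{\mathrm{pow}}\), so \(\sup\mathcal L_{\mathrm{pow}}\ge 1/a\). Multiplying by \(d\) gives
\[
\overline{\MI}_{\mathrm{pow}}\le d/a\le \underline{\MI}_{\mathrm{pow}}.
\]
For the reverse inequalities, I will show \(\underline{\MI}_{\mathrm{pow}}\le\overline{\MI}_{\mathrm{pow}}\) in general, whenever \(m(r)>0\) for small \(r\).

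Suppose \(\eta\in\mathcal L_{\mathrm{pow}}\) and \(\eta'\in\mathcal U_{\mathrm{pow}}\). Then
\[
c\,r^{1/\eta}\le m(r)\le C\,r^{1/\eta'}\quad\text{for small } r,
\]
so \(r^{1/\eta-1/\eta'}\le C/c\) as \(r\downarrow0\). This forces \(1/\eta\ge 1/\eta'\), i.e.\ \(\eta\le\eta'\). Taking the supremum over \(\eta\) and the infimum over \(\eta'\) gives \(\sup\mathcal L\le\inf\mathcal U\). Combining with the previous display, both indices equal \(d/a\), so the Power Mass Index exists.

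Positivity of \(m\) is automatic here: \(\mathcal L_{\mathrm{pow}}\neq\varnothing\) by hypothesis, so \(m(r)\ge c\,r^{a+\varepsilon}>0\). Nonemptiness of both sets also makes the conventions \(\sup\varnothing=0\) and \(\inf\varnothing=\infty\) irrelevant.

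Part (ii) follows the same template with gauge \(r^a\ell(r)^\eta\); here the exponent \(a_{p,\pa}\) is the given \(a\). The hypotheses give \(b+\varepsilon\in\mathcal U_{\mathrm{log}}\) and \(b-\varepsilon\in\mathcal L_{\mathrm{log}}\) for every \(\varepsilon>0\), so
\[
\inf\mathcal U_{\mathrm{log}}\le b\le\sup\mathcal L_{\mathrm{log}}.
\]
For the reverse direction, take \(\eta\in\mathcal L_{\mathrm{log}}\) and \(\eta'\in\mathcal U_{\mathrm{log}}\). Dividing the two bounds by \(r^a\) gives
\[
c\,\ell(r)^{\eta}\le C\,\ell(r)^{\eta'} .
\]
Since \(\ell(r)\to\infty\), this forces \(\eta\le\eta'\). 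Hence \(\sup\mathcal L_{\mathrm{log}}\le\inf\mathcal U_{\mathrm{log}}\), and both equal \(b\).

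I expect no real obstacle; this is bookkeeping. The one point needing care is the general sandwich \(\sup\mathcal L\le\inf\mathcal U\), which relies on \(m\) being strictly positive near \(0\) and on \(r^{s}\to\infty\) (respectively \(\ell(r)^s\to\infty\)) for \(s<0\) (respectively \(s>0\)). I will state this once as a small observation and then apply it in both parts.
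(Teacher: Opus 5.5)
Your proposal is correct and follows the paper's own route: read off from the hypotheses that every exponent on one side of the threshold is an upper gauge and every exponent on the other side is a lower gauge, then conclude \(\inf\mathcal U=\sup\mathcal L\) equals \(1/a\) (respectively \(b\)). The one difference is that you explicitly prove \(\sup\mathcal L\le\inf\mathcal U\) using positivity of \(m\) and the divergence of \(r^{s}\) or \(\ell(r)^{s}\). The paper leaves this step implicit in the phrase ``the sharp threshold is \(a\),'' so your version is simply the same argument written out more fully.
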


\begin{proof}
For \textnormal{(i)}, the upper bounds imply that every exponent strictly below
\(a\) is an upper power gauge, while the lower bounds imply that every exponent
strictly above \(a\) is a lower power gauge. Therefore the sharp threshold in
exponent form is \(a\). Since Definition~\ref{define:MI} parametrises power
bounds as \(r^{1/\eta}\), this gives
\[
 \inf\mathcal U_{\mathrm{pow}}=\sup\mathcal L_{\mathrm{pow}}=\frac1a,
\]
and hence \(\MI_{\mathrm{pow}}=d/a\).

For \textnormal{(ii)}, the definition of the logarithmic gauge is already taken
at the identified power scale \(a\). The upper bounds imply that every
logarithmic exponent strictly above \(b\) is an upper log gauge, and the lower
bounds imply that every logarithmic exponent strictly below \(b\) is a lower log
gauge. Thus
\[
 \inf\mathcal U_{\mathrm{log}}=\sup\mathcal L_{\mathrm{log}}=b,
\]
so \(\MI_{\mathrm{log}}=b\).
\end{proof}

\begin{lemma}[Radial power transform has bounded KL]
\label{lem:radial-power-kl}
Let \(\mu\) be a probability measure on \([0,\rho)\), and write
\(F(r):=\mu([0,r))\). Fix \(t>0\), and let \(\nu\) be the probability measure
on \([0,\rho)\) determined by
\[
 \nu([0,r))=F(r)^t,
 \qquad 0<r<\rho .
\]
Then \(\nu\ll\mu\), and
\[
 D_{\mathrm{KL}}(\nu\|\mu)
 \le
 \log t+\frac1t-1 .
\]
\end{lemma}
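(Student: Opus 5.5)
The plan is to reduce everything to the uniform case through a quantile coupling and then invoke the data-processing inequality for KL. When \(\mu\) has no atoms, the computation below turns out to hold with equality. Atoms are absorbed automatically by data processing.

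\emph{Step 1 (uniform model computation).} Let \(\lambda\) be the uniform law on \([0,1]\) and \(\beta_t\) the law on \([0,1]\) with density \(t u^{t-1}\), i.e.\ \(\beta_t([0,u))=u^t\). Then \(\beta_t\ll\lambda\) and
\[
D_{\mathrm{KL}}(\beta_t\|\lambda)=\int_0^1 t u^{t-1}\bigl(\log t+(t-1)\log u\bigr)\,\mathrm du=\log t+(t-1)\,\mathbb E_{\beta_t}[\log U].
\]
Substituting \(u=e^{-s}\) shows that \(-\log U\) is exponential with rate \(t\), so \(\mathbb E_{\beta_t}[\log U]=-1/t\). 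This gives \(D_{\mathrm{KL}}(\beta_t\|\lambda)=\log t-(t-1)/t=\log t+1/t-1\). The value is nonnegative, consistent with \(\log t\ge 1-1/t\).

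\emph{Step 2 (quantile coupling).} Define the generalised inverse \(G(u):=\inf\{x\in[0,\rho):\mu([0,x])\ge u\}\) for \(u\in(0,1)\). The standard fact is that \(G_\#\lambda=\mu\). I then need to check that \(G_\#\beta_t=\nu\). For \(0<r<\rho\), the intended equivalence is \(G(u)<r\iff u\le \sup_{x<r}\mu([0,x])=\mu([0,r))=F(r)\), modulo the boundary value \(u=F(r)\). The boundary value has \(\beta_t\)-measure zero because \(\beta_t\) is atomless. Hence \(\beta_t(\{u:G(u)<r\})=F(r)^t=\nu([0,r))\). Sets \([0,r)\) form a \(\pi\)-system generating the Borel sets of \([0,\rho)\), so the two probability measures coincide.

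\emph{Step 3 (data processing).} The KL divergence cannot increase under a common measurable map. Applying this with \(G\) gives
\[
D_{\mathrm{KL}}(\nu\|\mu)=D_{\mathrm{KL}}(G_\#\beta_t\|G_\#\lambda)\le D_{\mathrm{KL}}(\beta_t\|\lambda)=\log t+\frac1t-1 .
\]
Absolute continuity \(\nu\ll\mu\) follows directly: if \(\mu(E)=0\), then \(\lambda(G^{-1}E)=0\), hence \(\beta_t(G^{-1}E)=0\) since \(\beta_t\ll\lambda\), so \(\nu(E)=0\). It also follows from the finiteness of the right-hand side.

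\emph{Main obstacle.} The only delicate point is Step 2 when \(\mu\) has atoms. Then \(F\) has jumps, and the convention \(F(r)=\mu([0,r))\) (left-continuous) must be matched to the right choice of generalised inverse. I would first verify carefully the inclusions \(\{u<F(r)\}\subseteq\{G(u)<r\}\subseteq\{u\le F(r)\}\), which suffice because the difference of the two outer sets is a single point. The alternative is to prove the bound by direct computation of \(\mathrm d\nu/\mathrm d\mu\). On atoms this derivative is a difference quotient \((F(x+)^t-F(x)^t)/(F(x+)-F(x))\), and Jensen's inequality would be needed on each jump. The coupling argument avoids that case analysis entirely.
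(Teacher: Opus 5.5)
Your proof is correct and is essentially the paper's argument: you realise \(\nu\) and \(\mu\) as pushforwards of the law with density \(t u^{t-1}\) and of the uniform law under a common quantile map, then apply data processing and compute \(D_{\mathrm{KL}}(\beta_t\|\lambda)=\log t+1/t-1\). Your explicit check that \(G_\#\beta_t=\nu\), via \(\{u<F(r)\}\subseteq\{G(u)<r\}\subseteq\{u\le F(r)\}\) and the \(\pi\)-system argument, fills in a step the paper only asserts, and it correctly handles atoms of \(\mu\).
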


\begin{proof}
Let \(\lambda\) be Lebesgue measure on \([0,1]\), and let \(\lambda_t\) have
density
\[
 \frac{\mathrm d\lambda_t}{\mathrm d\lambda}(u)=t u^{t-1},
 \qquad 0<u<1 .
\]
Let \(Q\) be a quantile map that pushes \(\lambda\) forward to \(\mu\). Then the
measure defined above is the pushforward \(Q_\#\lambda_t\). Since
\(\lambda_t\ll\lambda\), we have \(\nu\ll\mu\). By data processing for KL
divergence,
\[
 D_{\mathrm{KL}}(\nu\|\mu)
 \le
 D_{\mathrm{KL}}(\lambda_t\|\lambda).
\]
The right-hand side is explicit:
\[
 D_{\mathrm{KL}}(\lambda_t\|\lambda)
 =\int_0^1t u^{t-1}\log(tu^{t-1})\,\mathrm d u
 =\log t+\frac1t-1 .
\]
This proves the claim.
\end{proof}

\subsection*{Proof of Proposition~\ref{prop:basic-calibrations}}
\begin{proof}
Let
\[
 m(r):=p(B_r(\pa)),
 \qquad
 \ell(r):=-\log r .
\]

\textnormal{(1) Local hole.}
If \(p(B_{r_0}(\pa))=0\), then \(m(r)=0\) for all \(0<r<r_0\). Hence
\(m(r)=O(r^{1/\eta})\) for every \(\eta>0\), so
\(\inf\mathcal U_{\mathrm{pow}}=0\). On the other hand,
\(m(r)=\Omega(r^{1/\eta})\) holds for no \(\eta>0\). By the convention
\(\sup\emptyset=0\), we also have \(\sup\mathcal L_{\mathrm{pow}}=0\). Thus
\[
 \MI_{\mathrm{pow}}(p,\pa)=0 .
\]

\textnormal{(2) Regular continuous mass.}
Suppose that \(p\) admits a continuous density \(\rho_p\) on \(B_{r_0}(\pa)\)
and \(\rho_p(\pa)>0\). Then, for some \(r_1\in(0,r_0)\) and constants
\(0<c<C<\infty\),
\[
 c\le \rho_p(x)\le C,
 \qquad x\in B_{r_1}(\pa).
\]
Therefore, for \(0<r<r_1\),
\[
 c\,\mathrm{Vol}(B_r(\pa))
 \le
 p(B_r(\pa))
 \le
 C\,\mathrm{Vol}(B_r(\pa)).
\]
Since \(\mathrm{Vol}(B_r(\pa))\asymp r^d\), we get
\[
 p(B_r(\pa))\asymp r^d .
\]
The definition of the Power Mass Index then gives
\[
 \MI_{\mathrm{pow}}(p,\pa)=1 .
\]

\textnormal{(3) Power-log asymptotics.}
Assume
\[
 p(B_r(\pa))\asymp r^a \ell(r)^b,
 \qquad r\downarrow0 .
\]
For every \(\varepsilon>0\), logarithmic factors are slower than powers, hence
\[
 r^a\ell(r)^b=O(r^{a-\varepsilon}),
 \qquad
 r^a\ell(r)^b=\Omega(r^{a+\varepsilon}).
\]
Lemma~\ref{lem:gauge-identification}\textnormal{(i)} gives
\[
 \MI_{\mathrm{pow}}(p,\pa)=\frac da .
\]
At the corresponding power scale \(a_{p,\pa}=a\), the two-sided asymptotic
also gives, for every \(\varepsilon>0\),
\[
 p(B_r(\pa))=O\!\left(r^a\ell(r)^{b+\varepsilon}\right),
 \qquad
 p(B_r(\pa))=\Omega\!\left(r^a\ell(r)^{b-\varepsilon}\right).
\]
Lemma~\ref{lem:gauge-identification}\textnormal{(ii)} gives
\[
 \MI_{\mathrm{log}}(p,\pa)=b .
\]

\textnormal{(4) Atomic mass.}
If \(p(\{\pa\})>0\), then \(m(r)\ge p(\{\pa\})>0\) for every \(r>0\). Hence
\(m(r)=O(r^{1/\eta})\) holds for no \(\eta>0\), so
\(\inf\mathcal U_{\mathrm{pow}}=\infty\). Also
\(m(r)=\Omega(r^{1/\eta})\) holds for every \(\eta>0\), so
\(\sup\mathcal L_{\mathrm{pow}}=\infty\). Therefore
\[
 \MI_{\mathrm{pow}}(p,\pa)=\infty .
\]
The proof is complete.
\end{proof}

\subsection*{Proof of the RE-KL relations}
\begin{proof}
Assume first that \(q\ll p\), and write
\[
 R:=\frac{\mathrm d q}{\mathrm d p}.
\]
Then
\[
 D_\alpha^\Theta(q\|p)
 =\int_\Theta f_\alpha(R)\,\mathrm d p
 =\frac{\int_\Theta R^\alpha\,\mathrm d p-1}{\alpha-1}
 =\frac{1-\int_\Theta R^\alpha\,\mathrm d p}{1-\alpha}.
\]
If both measures are represented by densities with respect to a common
dominating measure, this is exactly
\[
 (\alpha-1)^{-1}
 \left(\int q(x)^\alpha p(x)^{1-\alpha}\,\mathrm d x-1\right),
\]
which is the Tsallis-\(\alpha\) form used in the main text.

For \(\alpha=1/2\),
\[
 f_{1/2}(x)=(\sqrt x-1)^2 .
\]
Therefore
\[
 D_{1/2}^\Theta(q\|p)
 =\int_\Theta (\sqrt R-1)^2\,\mathrm d p
 =2\left(1-\int_\Theta \sqrt R\,\mathrm d p\right)
 =2\mathcal H^2(q,p),
\]
under the Hellinger convention used in the paper.

It remains to justify the monotone KL limit. For \(x>0\), differentiating
\[
 f_\alpha(x)=\frac{\alpha x-x^\alpha+1-\alpha}{1-\alpha}
\]
with respect to \(\alpha\) gives
\[
 \partial_\alpha f_\alpha(x)
 =\frac{x-x^\alpha\{1+(1-\alpha)\log x\}}{(1-\alpha)^2}
 =\frac{x^\alpha}{(1-\alpha)^2}
 \left(x^{1-\alpha}-1-\log x^{1-\alpha}\right).
\]
The elementary inequality \(u-1-\log u\ge0\), \(u>0\), therefore implies
\(\partial_\alpha f_\alpha(x)\ge0\). At \(x=0\), \(f_\alpha(0)=1\) for all
\(\alpha\in(0,1)\), so monotonicity is trivial. Moreover, for each fixed
\(x\ge0\), a Taylor expansion of \(x^\alpha\) at \(\alpha=1\), with the endpoint
values interpreted by continuity, gives
\[
 f_\alpha(x)\uparrow x\log x-x+1,
 \qquad \alpha\uparrow1 .
\]
By monotone convergence,
\[
 \lim_{\alpha\uparrow1}D_\alpha^\Theta(q\|p)
 =\int_\Theta (R\log R-R+1)\,\mathrm d p
 =\int_\Theta R\log R\,\mathrm d p
 =D_{\mathrm{KL}}(q\|p),
\]
because \(\int R\,\mathrm d p=1\). If \(q\not\ll p\), then the singular part is
charged by
\[
 \frac{\alpha}{1-\alpha}q_{\perp p}(\Theta),
\]
which diverges as \(\alpha\uparrow1\), matching
\(D_{\mathrm{KL}}(q\|p)=\infty\). This proves the stated relations.
\end{proof}
\subsection*{Proof of Theorem~\ref{thm:kl-mi-discontinuous}}
\begin{proof}
Let
\[
 m:=\MI_{\mathrm{pow}}(p,\pa)\in(0,\infty).
\]
Since the Power Mass Index of \(p\) at \(\pa\) exists, Definition~\ref{define:MI} gives
\[
 \inf\mathcal U_{\mathrm{pow}}^p
 =
 \sup\mathcal L_{\mathrm{pow}}^p
 =
 \frac md,
\]
where the superscript indicates that the corresponding sets are defined with
respect to \(p\).

Fix \(k>0\) with \(k\neq m\), and set
\[
 t:=\frac mk>0 .
\]
Choose a sequence \(\rho_n\downarrow0\), and define
\[
 E_n:=B_{\rho_n}(\pa),
 \qquad
 a_n:=p(E_n).
\]
Because \(m<\infty\), the measure \(p\) has no atom at \(\pa\), and
\(a_n\to0\). Since \(m>0\), there is no local hole around \(\pa\), so
\(a_n>0\) for all sufficiently large \(n\). Passing to this tail of the
sequence, define the conditional probability measure
\[
 p_n(A):=\frac{p(A\cap E_n)}{a_n}.
\]

Let
\[
 R(x):=\|x-\pa\|,
 \qquad
 F_n(r):=p_n(B_r(\pa)),
 \qquad 0<r<\rho_n .
\]
Let \(\mu_n\) be the radial distribution of \(R\) under \(p_n\). To keep the
open-ball convention used in the Mass Index definition, we use half-open radial
intervals and write
\[
 \mu_n([0,r))=F_n(r)=p_n(B_r(\pa)).
\]
This convention avoids any ambiguity from possible mass on spheres. Define a
new radial probability measure \(\nu_n\) by
\[
 \nu_n([0,r))=F_n(r)^t,
 \qquad 0<r<\rho_n .
\]
Disintegrate \(p_n\) with respect to the radial variable:
\[
 p_n(\mathrm d x)=\mu_n(\mathrm d s)K_n(s,\mathrm d\omega),
 \qquad s=\|x-\pa\|,
\]
where \(K_n(s,\cdot)\) is a regular conditional angular distribution. Define
\[
 \eta_n(\mathrm d x):=\nu_n(\mathrm d s)K_n(s,\mathrm d\omega).
\]
Then \(\eta_n\ll p_n\), and, for \(0<r<\rho_n\),
\[
 \eta_n(B_r(\pa))
 =\nu_n([0,r))
 =F_n(r)^t
 =\left(\frac{p(B_r(\pa))}{a_n}\right)^t .
\]

By Lemma~\ref{lem:radial-power-kl},
\[
 D_{\mathrm{KL}}(\nu_n\|\mu_n)
 \le
 C_t:=\log t+\frac1t-1<\infty .
\]
Since \(\eta_n\) and \(p_n\) have the same conditional angular distributions, the
Radon--Nikodym derivative is
\[
 \frac{\mathrm d\eta_n}{\mathrm d p_n}(x)
 =
 \frac{\mathrm d\nu_n}{\mathrm d\mu_n}(\|x-\pa\|)
 \quad p_n\text{-a.e.}
\]
Therefore
\[
\begin{aligned}
 D_{\mathrm{KL}}(\eta_n\|p_n)
 &=\int \log\!\left(\frac{\mathrm d\eta_n}{\mathrm d p_n}\right)
       \,\mathrm d\eta_n  \\
 &=\int \log\!\left(\frac{\mathrm d\nu_n}{\mathrm d\mu_n}\right)
       \,\mathrm d\nu_n
 =D_{\mathrm{KL}}(\nu_n\|\mu_n)
 \le C_t .
\end{aligned}
\]
This bound is uniform in \(n\).

Now define
\[
 q_n(A):=p(A\cap E_n^c)+a_n\eta_n(A\cap E_n).
\]
Then \(q_n\) is a probability measure, \(q_n\ll p\), and \(q_n=p\) on
\(E_n^c\). For \(0<r<\rho_n\),
\[
 q_n(B_r(\pa))
 =a_n\eta_n(B_r(\pa))
 =a_n^{1-t}p(B_r(\pa))^t .
\]
The positive constant \(a_n^{1-t}\) does not affect the local \(O\)- or
\(\Omega\)-classes as \(r\downarrow0\). Therefore, for every \(\eta>0\),
\[
 q_n(B_r(\pa))=O(r^{1/\eta})
 \quad\Longleftrightarrow\quad
 p(B_r(\pa))=O(r^{1/(t\eta)}),
\]
and similarly
\[
 q_n(B_r(\pa))=\Omega(r^{1/\eta})
 \quad\Longleftrightarrow\quad
 p(B_r(\pa))=\Omega(r^{1/(t\eta)}).
\]
Thus
\[
 \inf\mathcal U_{\mathrm{pow}}^{q_n}
 =\frac1t\inf\mathcal U_{\mathrm{pow}}^p
 =\frac1t\frac md
 =\frac kd,
\]
and
\[
 \sup\mathcal L_{\mathrm{pow}}^{q_n}
 =\frac1t\sup\mathcal L_{\mathrm{pow}}^p
 =\frac1t\frac md
 =\frac kd.
\]
Consequently,
\[
 \MI_{\mathrm{pow}}(q_n,\pa)=k
\]
for every \(n\).

Finally, since \(q_n=p\) outside \(E_n\) and the conditional law inside \(E_n\)
is changed from \(p_n\) to \(\eta_n\),
\[
 D_{\mathrm{KL}}(q_n\|p)
 =a_nD_{\mathrm{KL}}(\eta_n\|p_n)
 \le a_nC_t.
\]
As \(a_n=p(B_{\rho_n}(\pa))\to0\), we obtain
\[
 D_{\mathrm{KL}}(q_n\|p)\to0 .
\]
Thus \(q_n\in\mathcal D_{\pa}\), \(D_{\mathrm{KL}}(q_n\|p)\to0\), and
\(\MI_{\mathrm{pow}}(q_n,\pa)=k\) for every \(n\). This proves the theorem.
\end{proof}

\subsection*{Proof of Theorem~\ref{thm:local-likelihood-scaling}}
\begin{proof}
Write
\[
 m_0(r):=\pi_0(B_r(\pa)),
 \qquad
 \ell(r):=\log\frac1r .
\]
First assume that
\[
 0<\MI_{\mathrm{pow}}(\pi_0,\pa)<\infty,
 \qquad
 a:=d\,\MI_{\mathrm{pow}}^{-1}(\pi_0,\pa).
\]
The condition in the theorem is exactly
\[
 a+\gamma>0 .
\]
Let
\[
 I(r):=
 \int_{B_r(\pa)}
 \|x-\pa\|^\gamma
 \left(\log\frac1{\|x-\pa\|}\right)^\beta
 \pi_0(\mathrm d x).
\]
Since \(0<\MI_{\mathrm{pow}}(\pi_0,\pa)<\infty\), the prior has no atom at
\(\pa\), so the value of the integrand at \(\pa\) is irrelevant. The local
likelihood assumption gives constants \(0<c<C<\infty\) and \(r_0>0\) such that,
for \(x\in B_{r_0}(\pa)\setminus\{\pa\}\),
\[
 c\|x-\pa\|^\gamma
 \left(\log\frac1{\|x-\pa\|}\right)^\beta
 \le L(x\mid\mathcal D)
 \le C\|x-\pa\|^\gamma
 \left(\log\frac1{\|x-\pa\|}\right)^\beta .
\]
By Bayes' formula,
\[
 \pi_{\mathcal D}(B_r(\pa))
 =Z_{\mathcal D}^{-1}
 \int_{B_r(\pa)}L(x\mid\mathcal D)\,\pi_0(\mathrm d x).
\]
Since \(Z_{\mathcal D}\in(0,\infty)\), the normalising constant is fixed, and
therefore
\[
 \pi_{\mathcal D}(B_r(\pa))\asymp I(r),
 \qquad r\downarrow0 .
\]
It remains to identify the power and logarithmic orders of \(I(r)\).

For the power order, the definition of the Power Mass Index gives, for every
sufficiently small \(u>0\),
\[
 m_0(r)=O(r^{a-u}),
 \qquad
 m_0(r)=\Omega(r^{a+u}).
\]
We claim that, for every \(\varepsilon>0\),
\[
 I(r)=O(r^{a+\gamma-\varepsilon}),
 \qquad
 I(r)=\Omega(r^{a+\gamma+\varepsilon}).
\]
For the upper bound, choose \(u,v>0\) such that
\[
 u+v<\varepsilon,
 \qquad
 a+\gamma-u-v>0 .
\]
Since logarithmic factors are slower than powers,
\[
 \left(\log\frac1t\right)^\beta\le C_vt^{-v}
\]
for all sufficiently small \(t>0\), with \(C_v\) independent of the annulus
index used below. Let \(r_j=e^{-j}r\), and set
\[
 A_j:=B_{r_j}(\pa)\setminus B_{r_{j+1}}(\pa),
 \qquad j\ge0 .
\]
Then \(B_r(\pa)\setminus\{\pa\}=\bigcup_{j\ge0}A_j\). On \(A_j\),
\(r_{j+1}\le \|x-\pa\|<r_j\). Hence, after absorbing the fixed factor
\(e^{|\gamma-v|}\) when \(\gamma-v<0\),
\[
 \|x-\pa\|^\gamma
 \left(\log\frac1{\|x-\pa\|}\right)^\beta
 \le C r_j^{\gamma-v},
\]
where the constant is uniform in \(j\) and in all sufficiently small \(r\).
Therefore
\[
\begin{aligned}
 I(r)
 &\le C\sum_{j=0}^{\infty} r_j^{\gamma-v}\pi_0(A_j) \\
 &\le C\sum_{j=0}^{\infty} r_j^{\gamma-v}m_0(r_j) \\
 &\le C\sum_{j=0}^{\infty} r_j^{a+\gamma-u-v}
 \le C r^{a+\gamma-u-v}.
\end{aligned}
\]
This proves \(I(r)=O(r^{a+\gamma-\varepsilon})\).

For the lower bound, choose \(u,v>0\) and \(s>1\), with \(s\) sufficiently close
to one, such that
\[
 s(a-u)>a+u,
 \qquad
 u+(s-1)\max\{\gamma,0\}+v<\varepsilon .
\]
Let
\[
 A(r):=B_r(\pa)\setminus B_{r^s}(\pa).
\]
The prior power bounds imply
\[
 \pi_0(A(r))
 =m_0(r)-m_0(r^s)
 \ge c r^{a+u}-C r^{s(a-u)}
 \ge c r^{a+u}
\]
for all sufficiently small \(r\). On \(A(r)\),
\[
 \|x-\pa\|^\gamma
 \ge C r^{\gamma+(s-1)\max\{\gamma,0\}},
 \qquad
 \left(\log\frac1{\|x-\pa\|}\right)^\beta
 \ge C r^v .
\]
Consequently,
\[
 I(r)
 \ge C r^{a+\gamma+u+(s-1)\max\{\gamma,0\}+v} .
\]
Because
\[
 u+(s-1)\max\{\gamma,0\}+v<\varepsilon,
\]
we have
\[
 r^{a+\gamma+u+(s-1)\max\{\gamma,0\}+v}
 =\Omega(r^{a+\gamma+\varepsilon}),
\]
and hence \(I(r)=\Omega(r^{a+\gamma+\varepsilon})\).
The preceding two estimates satisfy Lemma~\ref{lem:gauge-identification}\textnormal{(i)}
with exponent \(a+\gamma\). Therefore
\[
 d\,\MI_{\mathrm{pow}}^{-1}(\pi_{\mathcal D},\pa)
 =a+\gamma
 =d\,\MI_{\mathrm{pow}}^{-1}(\pi_0,\pa)+\gamma .
\]
Equivalently,
\[
 \MI_{\mathrm{pow}}^{-1}(\pi_{\mathcal D},\pa)
 =\MI_{\mathrm{pow}}^{-1}(\pi_0,\pa)+d^{-1}\gamma .
\]

Now assume that \(\MI_{\mathrm{log}}(\pi_0,\pa)\) exists, and set
\[
 b:=\MI_{\mathrm{log}}(\pi_0,\pa),
 \qquad
 \lambda:=a+\gamma>0 .
\]
Then, for every sufficiently small \(u>0\),
\[
 m_0(r)=O\!\left(r^a\ell(r)^{b+u}\right),
 \qquad
 m_0(r)=\Omega\!\left(r^a\ell(r)^{b-u}\right).
\]
We show that, for every \(\varepsilon>0\),
\[
 I(r)=O\!\left(r^\lambda\ell(r)^{b+\beta+\varepsilon}\right),
 \qquad
 I(r)=\Omega\!\left(r^\lambda\ell(r)^{b+\beta-\varepsilon}\right).
\]
For the upper bound, use the same annuli \(A_j\), and choose
\(u\in(0,\varepsilon)\). On \(A_j\), the relations
\(r_{j+1}\le \|x-\pa\|<r_j\) and
\(\log(1/\|x-\pa\|)\asymp\ell(r_j)\) give
\[
 \|x-\pa\|^\gamma
 \left(\log\frac1{\|x-\pa\|}\right)^\beta
 \le C r_j^\gamma\ell(r_j)^\beta ,
\]
with a constant uniform in \(j\) and in all sufficiently small \(r\).
Hence
\[
\begin{aligned}
 I(r)
 &\le C\sum_{j=0}^{\infty}r_j^\gamma\ell(r_j)^\beta m_0(r_j)\\
 &\le C\sum_{j=0}^{\infty}r_j^{a+\gamma}\ell(r_j)^{b+\beta+u}\\
 &=C r^\lambda\sum_{j=0}^{\infty}e^{-j\lambda}(\ell(r)+j)^{b+\beta+u}.
\end{aligned}
\]
For \(c=b+\beta+u\), the elementary geometric estimate
\[
 \sum_{j=0}^{\infty}e^{-j\lambda}(\ell(r)+j)^c
 \le C\ell(r)^c
\]
holds uniformly for all sufficiently small \(r\). Indeed, if \(c<0\), then
\((\ell(r)+j)^c\le \ell(r)^c\); if \(c\ge0\), then
\((\ell(r)+j)^c\le \ell(r)^c(1+j)^c\), and
\(\sum_{j\ge0}e^{-j\lambda}(1+j)^c<\infty\). Therefore
\[
 I(r)\le C r^\lambda\ell(r)^{b+\beta+u}.
\]
This gives the desired upper logarithmic bound.

For the lower bound, write \(\gamma_+:=\max\{\gamma,0\}\). Choose \(u>0\) and
\(K>0\) such that
\[
 Ka>2u,
 \qquad
 u+K\gamma_+<\varepsilon .
\]
Set
\[
 \rho(r):=r\ell(r)^{-K},
 \qquad
 A(r):=B_r(\pa)\setminus B_{\rho(r)}(\pa).
\]
Using the lower and upper logarithmic bounds for \(m_0\),
\[
\begin{aligned}
 \pi_0(A(r))
 &=m_0(r)-m_0(\rho(r))\\
 &\ge c r^a\ell(r)^{b-u}
 -C r^a\ell(r)^{b+u-Ka}
 \ge c r^a\ell(r)^{b-u}
\end{aligned}
\]
for all sufficiently small \(r\). On \(A(r)\),
\[
 \|x-\pa\|^\gamma\ge C r^\gamma\ell(r)^{-K\gamma_+},
 \qquad
 \log\frac1{\|x-\pa\|}\asymp\ell(r).
\]
Therefore
\[
 I(r)
 \ge C r^{a+\gamma}\ell(r)^{b+\beta-u-K\gamma_+}
 =\Omega\!\left(r^\lambda\ell(r)^{b+\beta-\varepsilon}\right).
\]
The upper and lower logarithmic bounds satisfy
Lemma~\ref{lem:gauge-identification}\textnormal{(ii)} at the already identified power scale
\(\lambda\). Hence
\[
 \MI_{\mathrm{log}}(\pi_{\mathcal D},\pa)
 =b+\beta
 =\MI_{\mathrm{log}}(\pi_0,\pa)+\beta .
\]

It remains to treat the case \(\MI_{\mathrm{pow}}(\pi_0,\pa)=0\). In this case,
for every \(M>0\),
\[
 m_0(r)=O(r^M),
 \qquad r\downarrow0 .
\]
Fix any \(N>0\). Choose \(v>0\), and then choose \(M\) so large that
\[
 M+\gamma-v>N,
 \qquad
 M+\gamma-v>0 .
\]
Using the same annular decomposition as above and the bound
\(m_0(r_j)=O(r_j^M)\), we obtain
\[
\begin{aligned}
 I(r)
 &\le C\sum_{j=0}^{\infty} r_j^{\gamma-v}m_0(r_j) \\
 &\le C\sum_{j=0}^{\infty} r_j^{M+\gamma-v}
 \le C r^{M+\gamma-v}
 =O(r^N).
\end{aligned}
\]
Thus \(\pi_{\mathcal D}(B_r(\pa))=O(r^N)\) for every \(N>0\). Consequently the
posterior small-ball mass is faster than every polynomial power. Hence every
positive power is an upper gauge and no positive power is a lower gauge, which
gives
\[
 \MI_{\mathrm{pow}}(\pi_{\mathcal D},\pa)=0 .
\]
The proof is complete.
\end{proof}

\subsection*{Proof of Theorem~\ref{thm:acceptance}}
\begin{proof}
Set
\[
 m_0(r):=\pi_0(B_r(\pa)),
 \qquad
 m_{\mathcal D}(r):=\pi_{\mathcal D}(B_r(\pa)),
 \qquad
 \ell(r):=-\log r .
\]
By the theorem hypothesis, \(0<\MI_{\mathrm{pow}}(\pi_0,\pa)<\infty\). Set
\[
 a:=d\,\MI_{\mathrm{pow}}^{-1}(\pi_0,\pa).
\]
Since \(g(x)\asymp1\) as \(x\to\pa\), there exist constants
\(0<c_g<C_g<\infty\) and \(r_0>0\) such that
\[
 c_g\le g(x)\le C_g,
 \qquad x\in B_{r_0}(\pa).
\]
By Bayes' formula and the finite marginal assumption, the normalising constant
only changes small-ball masses by a fixed multiplicative constant. Hence, for
\(0<r<r_0\),
\[
 m_{\mathcal D}(r)
 \asymp
 \pi_0(B_r(\pa)\cap E)
 =A_{E,\pa}(r)m_0(r).
\]
Using the assumed acceptance-ratio scaling,
\begin{equation}\label{eq:acceptance-proof-comparison-new}
 m_{\mathcal D}(r)
 \asymp
 r^\delta\ell(r)^\lambda m_0(r),
 \qquad r\downarrow0 .
\end{equation}
Because \(A_{E,\pa}(r)\in[0,1]\), the regime stated in the theorem is the
non-degenerate regime compatible with this two-sided comparison.

For every \(\varepsilon>0\), the prior Power Mass Index gives
\[
 m_0(r)=O(r^{a-\varepsilon}),
 \qquad
 m_0(r)=\Omega(r^{a+\varepsilon}).
\]
Since logarithmic factors are sub-polynomial, for every \(\eta>0\),
\[
 \ell(r)^\lambda=O(r^{-\eta}),
 \qquad
 \ell(r)^\lambda=\Omega(r^{\eta}) .
\]
Given \(\varepsilon>0\), apply the prior bounds with \(\varepsilon/2\) and the
last display with \(\eta=\varepsilon/2\). Then
\eqref{eq:acceptance-proof-comparison-new} gives
\[
 m_{\mathcal D}(r)=O(r^{a+\delta-\varepsilon}),
 \qquad
 m_{\mathcal D}(r)=\Omega(r^{a+\delta+\varepsilon}).
\]
By Lemma~\ref{lem:gauge-identification}\textnormal{(i)}, the posterior power
exponent is \(a+\delta\), and
\[
 d\,\MI_{\mathrm{pow}}^{-1}(\pi_{\mathcal D},\pa)
 =a+\delta
 =d\,\MI_{\mathrm{pow}}^{-1}(\pi_0,\pa)+\delta .
\]
Equivalently,
\[
 \MI_{\mathrm{pow}}^{-1}(\pi_{\mathcal D},\pa)
 =\MI_{\mathrm{pow}}^{-1}(\pi_0,\pa)+d^{-1}\delta .
\]

Now assume that \(\MI_{\mathrm{log}}(\pi_0,\pa)\) exists and set
\[
 b:=\MI_{\mathrm{log}}(\pi_0,\pa).
\]
Then, for every \(\varepsilon>0\),
\[
 m_0(r)=O\!\left(r^a\ell(r)^{b+\varepsilon}\right),
 \qquad
 m_0(r)=\Omega\!\left(r^a\ell(r)^{b-\varepsilon}\right).
\]
Combining these bounds with \eqref{eq:acceptance-proof-comparison-new} gives
\[
 m_{\mathcal D}(r)
 =O\!\left(r^{a+\delta}\ell(r)^{b+\lambda+\varepsilon}\right),
 \qquad
 m_{\mathcal D}(r)
 =\Omega\!\left(r^{a+\delta}\ell(r)^{b+\lambda-\varepsilon}\right).
\]
The posterior power exponent has already been identified as \(a+\delta\). Hence
Lemma~\ref{lem:gauge-identification}\textnormal{(ii)} gives
\[
 \MI_{\mathrm{log}}(\pi_{\mathcal D},\pa)
 =b+\lambda
 =\MI_{\mathrm{log}}(\pi_0,\pa)+\lambda .
\]
The proof is complete.
\end{proof}

\subsection*{Proof of Corollary~\ref{cor:softening-noncommuting}}
\begin{proof}
For each \(\tau>0\), define
\[
 Z_\tau:=\int s_\tau(x)\,\pi_0(\mathrm d x),
 \qquad
 Z_E:=\int \mathbf 1_E(x)\,\pi_0(\mathrm d x).
\]
For each \(\tau>0\), the softened normalising constant satisfies
\(0<Z_\tau\le C\). By Assumption~\ref{as:bayes} applied to the hard likelihood
\(L=\mathbf 1_E\), we have \(Z_E\in(0,\infty)\). By uniform boundedness,
pointwise convergence \(s_\tau\to\mathbf 1_E\) \(\pi_0\)-a.e., and dominated
convergence,
\[
 Z_\tau\to Z_E .
\]
For any measurable set \(A\),
\[
 \pi_\tau(A)=Z_\tau^{-1}\int_A s_\tau(x)\,\pi_0(\mathrm d x),
 \qquad
 \pi_{\mathcal D}(A)=Z_E^{-1}\int_A \mathbf 1_E(x)\,\pi_0(\mathrm d x).
\]
Therefore
\[
 \|\pi_\tau-\pi_{\mathcal D}\|_{\mathrm{TV}}
 \le
 |Z_\tau^{-1}-Z_E^{-1}|Z_\tau
 +Z_E^{-1}\int |s_\tau-\mathbf 1_E|\,\mathrm d\pi_0,
\]
and both terms converge to zero. Hence
\[
 \|\pi_\tau-\pi_{\mathcal D}\|_{\mathrm{TV}}\to0 .
\]

Now fix \(\tau>0\). Since \(s_\tau\) is positive and continuous, there are constants
\(0<c_\tau<C_\tau<\infty\) and \(r_\tau>0\) such that
\[
 c_\tau\le s_\tau(x)\le C_\tau,
 \qquad x\in B_{r_\tau}(\pa).
\]
Thus
\[
 \pi_\tau(B_r(\pa))\asymp\pi_0(B_r(\pa)),
 \qquad r\downarrow0,
\]
for each fixed \(\tau>0\). Consequently
\[
 \MI_{\mathrm{pow}}(\pi_\tau,\pa)
 =\MI_{\mathrm{pow}}(\pi_0,\pa),
 \qquad \tau>0 .
\]
If \(A_{E,\pa}(r)\asymp r^\delta\ell(r)^\lambda\) with \(\delta>0\), then
Theorem~\ref{thm:acceptance} applied to the hard posterior gives
\[
 d\,\MI_{\mathrm{pow}}^{-1}(\pi_{\mathcal D},\pa)
 =d\,\MI_{\mathrm{pow}}^{-1}(\pi_0,\pa)+\delta .
\]
This differs from the softened exponent
\(d\,\MI_{\mathrm{pow}}^{-1}(\pi_0,\pa)\). Hence
\[
 \MI_{\mathrm{pow}}(\pi_{\mathcal D},\pa)
 \neq
 \lim_{\tau\downarrow0}\MI_{\mathrm{pow}}(\pi_\tau,\pa).
\]

For the logarithmic statement, assume \(\delta=0\), \(\lambda<0\), and that
\(\MI_{\mathrm{log}}(\pi_0,\pa)\) exists. For each fixed \(\tau>0\), the local
positivity of \(s_\tau\) gives
\[
 \MI_{\mathrm{log}}(\pi_\tau,\pa)
 =\MI_{\mathrm{log}}(\pi_0,\pa).
\]
For the hard posterior, Theorem~\ref{thm:acceptance} gives
\[
 \MI_{\mathrm{log}}(\pi_{\mathcal D},\pa)
 =\MI_{\mathrm{log}}(\pi_0,\pa)+\lambda .
\]
Since \(\lambda<0\), the two logarithmic indices differ, and therefore
\[
 \MI_{\mathrm{log}}(\pi_{\mathcal D},\pa)
 \neq
 \lim_{\tau\downarrow0}\MI_{\mathrm{log}}(\pi_\tau,\pa).
\]
The proof is complete.
\end{proof}

\subsection*{Proof of Theorem~\ref{thm:local mass}}
\begin{proof}
Let \(q=q_{\ll p}+q_{\perp p}\), and write
\[
 R:=\frac{\mathrm d q_{\ll p}}{\mathrm d p}.
\]
For the restrictions of \(q\) and \(p\) to \(E\), define the local Hellinger-type
quantity
\[
 H_E^2(q,p)
 :=p(E)+q(E)-2\int_E\sqrt R\,\mathrm d p.
\]
Using the Lebesgue decomposition, this can be written as
\[
 H_E^2(q,p)
 =\int_E(\sqrt R-1)^2\,\mathrm d p+q_{\perp p}(E).
\]
We first spell out the finite-measure Cauchy--Schwarz step. Since
\[
 \int_E\sqrt R\,\mathrm d p
 \le
 \sqrt{q_{\ll p}(E)}\sqrt{p(E)}
 \le
 \sqrt{q(E)p(E)},
\]
we have
\[
\begin{aligned}
 H_E^2(q,p)
 &=p(E)+q(E)-2\int_E\sqrt R\,\mathrm d p \\
 &\ge p(E)+q(E)-2\sqrt{p(E)q(E)}
 =\left(\sqrt{q(E)}-\sqrt{p(E)}\right)^2 .
\end{aligned}
\]
Hence
\[
 \left|\sqrt{q(E)}-\sqrt{p(E)}\right|
 \le H_E(q,p),
\]
and therefore
\[
 |q(E)-p(E)|
 \le
 \bigl(\sqrt{q(E)}+\sqrt{p(E)}\bigr)H_E(q,p)
 \le 2H_E(q,p),
\]
because \(p(E),q(E)\le1\).

It remains to compare \(H_E^2(q,p)\) with \(D_\alpha^E(q\|p)\). We prove the
scalar inequality
\[
 (\sqrt x-1)^2\le C_\alpha f_\alpha(x),
 \qquad x\ge0,
 \qquad
 C_\alpha:=\max\left\{1,\frac{1-\alpha}{\alpha}\right\}.
\]
Set \(t=\sqrt x\). If \(0<\alpha\le1/2\), then
\(C_\alpha=(1-\alpha)/\alpha\), and the desired inequality is equivalent to
\[
 2\alpha t-t^{2\alpha}-2\alpha+1\ge0 .
\]
The derivative of the left-hand side is
\(2\alpha(1-t^{2\alpha-1})\), which is negative on \((0,1)\) and positive on
\((1,\infty)\). Since the value at \(t=1\) is zero, the inequality follows. If
\(1/2\le\alpha<1\), then \(C_\alpha=1\), and the desired inequality is equivalent
to
\[
 (2\alpha-1)t^2+2(1-\alpha)t\ge t^{2\alpha}.
\]
This follows from weighted AM--GM, because the weights \(2\alpha-1\) and
\(2(1-\alpha)\) are non-negative and sum to one. The endpoint \(x=0\) is covered
by continuity.

Consequently,
\[
 \int_E(\sqrt R-1)^2\,\mathrm d p
 \le C_\alpha\int_E f_\alpha(R)\,\mathrm d p .
\]
Moreover, since \(C_\alpha\ge (1-\alpha)/\alpha\),
\[
 q_{\perp p}(E)
 \le
 C_\alpha\frac{\alpha}{1-\alpha}q_{\perp p}(E).
\]
Therefore
\[
 H_E^2(q,p)
 \le
 C_\alpha\left\{
 \int_E f_\alpha(R)\,\mathrm d p
 +\frac{\alpha}{1-\alpha}q_{\perp p}(E)
 \right\}
 =C_\alpha D_\alpha^E(q\|p).
\]
Combining the two displays yields
\[
 |q(E)-p(E)|
 \le
 2\sqrt{C_\alpha D_\alpha^E(q\|p)}.
\]
This proves the theorem.
\end{proof}
\subsection*{Proof of Corollary~\ref{cor:local mass}}
\begin{proof}
Apply Theorem~\ref{thm:local mass} with \(E=B_r(\pa)\). If
\[
 D_\alpha^{B_r(\pa)}(q\|p)=O(r^\delta),
\]
then
\[
 |q(B_r(\pa))-p(B_r(\pa))|
 \le
 2\sqrt{C_\alpha D_\alpha^{B_r(\pa)}(q\|p)}
 =O(r^{\delta/2}).
\]
\end{proof}

\subsection*{Proof of Theorem~\ref{thm:local multiplicative}}
\begin{proof}
Assume \(p(E)>0\). Let \(q=q_{\ll p}+q_{\perp p}\), and write
\[
 R:=\frac{\mathrm d q_{\ll p}}{\mathrm d p},
 \qquad
 x:=\frac1{p(E)}\int_E R\,\mathrm d p,
 \qquad
 h:=\frac{q_{\perp p}(E)}{p(E)}.
\]
Then
\[
 \frac{q(E)}{p(E)}=x+h.
\]
Since
\[
 f_\alpha'(u)=\frac{\alpha}{1-\alpha}(1-u^{\alpha-1})
 \le \frac{\alpha}{1-\alpha},
\]
we have
\[
 f_\alpha(x+h)
 \le
 f_\alpha(x)+\frac{\alpha}{1-\alpha}h .
\]
By Jensen's inequality,
\[
 f_\alpha(x)
 \le
 \frac1{p(E)}\int_E f_\alpha(R)\,\mathrm d p.
\]
Combining these bounds gives
\[
 f_\alpha\!\left(\frac{q(E)}{p(E)}\right)
 \le
 \frac{D_\alpha^E(q\|p)}{p(E)}
 =\overline D_\alpha^E(q\|p).
\]
This proves the first claim.

For the relative error bound, we derive the scalar inequality
\[
 f_\alpha(y)
 \ge
 \frac{\alpha |y-1|^2}{2(1+|y-1|)},
 \qquad y\ge0 .
\]
Since \(f_\alpha(1)=f_\alpha'(1)=0\) and \(f_\alpha''(u)=\alpha u^{\alpha-2}\),
Taylor's formula with integral remainder gives, for \(0\le y\le1\),
\[
 f_\alpha(y)
 =\alpha\int_y^1 (u-y)u^{\alpha-2}\,\mathrm d u
 \ge \frac{\alpha}{2}(1-y)^2
 \ge
 \frac{\alpha(1-y)^2}{2(1+1-y)}.
\]
For \(y=1+s\ge1\), the same formula gives
\[
 f_\alpha(1+s)
 =\alpha\int_0^s (s-u)(1+u)^{\alpha-2}\,\mathrm d u .
\]
Because \(\alpha>0\), \((1+u)^{\alpha-2}\ge(1+u)^{-2}\). Hence
\[
 f_\alpha(1+s)
 \ge
 \alpha\int_0^s \frac{s-u}{(1+u)^2}\,\mathrm d u
 =\alpha\{s-\log(1+s)\}.
\]
The elementary bound
\[
 s-\log(1+s)\ge \frac{s^2}{2(1+s)},
 \qquad s\ge0,
\]
follows by differentiating the difference between the two sides. Therefore the
same scalar inequality holds for \(y\ge1\).

Set
\[
 y:=\frac{q(E)}{p(E)},
 \qquad
 s:=|1-y|,
 \qquad
 z:=\overline D_\alpha^E(q\|p).
\]
The first part gives \(f_\alpha(y)\le z\), so
\[
 \frac{\alpha s^2}{2(1+s)}\le z .
\]
Solving this quadratic inequality for \(s\ge0\) yields
\[
 s
 \le
 \frac z\alpha
 +
 \sqrt{\left(\frac z\alpha\right)^2+\frac{2z}{\alpha}} .
\]
The last display is bounded by
\[
 \frac2\alpha\sqrt{z}\sqrt{z+2}.
\]
Therefore
\[
 \left|1-\frac{q(E)}{p(E)}\right|
 \le
 \frac2\alpha
 \sqrt{\overline D_\alpha^E(q\|p)}
 \sqrt{\overline D_\alpha^E(q\|p)+2}.
\]
This proves the theorem.
\end{proof}
\subsection*{Proof of Corollary~\ref{cor:local multiplicative}}
\begin{proof}
Set
\[
 t_r:=\overline D_\alpha^{B_r(\pa)}(q\|p).
\]
By assumption,
\[
 t_r=O(r^\delta),
 \qquad r\downarrow0,
\]
and hence \(t_r\to0\). Applying Theorem~\ref{thm:local multiplicative} with \(E=B_r(\pa)\) gives
\[
 \left|1-\frac{q(B_r(\pa))}{p(B_r(\pa))}\right|
 \le
 \frac2\alpha\sqrt{t_r}\sqrt{t_r+2}
 =O(t_r^{1/2})
 =O(r^{\delta/2}).
\]
\end{proof}

\subsection*{Proof of Theorem~\ref{thm:one-sided-MI-pow}}
\begin{proof}
Let
\[
 E_r:=B_r(\pa),
 \qquad
 m_p(r):=p(E_r),
 \qquad
 m_q(r):=q(E_r).
\]
By assumption, \(m_p(r)>0\) and \(m_q(r)>0\) for all sufficiently small \(r\).
We use two elementary facts about \(f_\alpha\). First, \(f_\alpha\) is continuous
on \([0,\infty)\), satisfies \(f_\alpha(1)=0\), \(f_\alpha(0)=1\), and diverges
as \(x\to\infty\). Hence, for every \(\lambda<1\), the sublevel set
\[
 \{x\ge0:f_\alpha(x)\le\lambda\}
\]
is contained in an interval \([c,C]\) with \(0<c<C<\infty\). Second, for every
finite \(M\), the sublevel set
\[
 \{x\ge0:f_\alpha(x)\le M\}
\]
is bounded above.

We first prove \textnormal{(i)}. Suppose
\[
 \limsup_{r\downarrow0}\overline D_\alpha^{E_r}(q\|p)<1 .
\]
Choose \(\lambda<1\) such that
\[
 \overline D_\alpha^{E_r}(q\|p)\le\lambda
\]
for all sufficiently small \(r\). By Theorem~\ref{thm:local multiplicative},
\[
 f_\alpha\!\left(\frac{m_q(r)}{m_p(r)}\right)
 \le
 \overline D_\alpha^{E_r}(q\|p)
 \le\lambda .
\]
The first scalar fact gives constants \(0<c<C<\infty\) such that
\[
 c\le\frac{m_q(r)}{m_p(r)}\le C
\]
for all sufficiently small \(r\). Equivalently,
\[
 m_q(r)\asymp m_p(r),
 \qquad r\downarrow0 .
\]
Thus the upper and lower power gauge classes of \(p\) and \(q\) coincide. For
any small-ball function, \(\sup\mathcal L_{\mathrm{pow}}\le
\inf\mathcal U_{\mathrm{pow}}\). Since Assumption~\ref{as:existence} gives equality of these two
quantities for the relevant measures, the common gauge classes identify
\[
 \MI_{\mathrm{pow}}(q,\pa)=\MI_{\mathrm{pow}}(p,\pa).
\]

If \(\MI_{\mathrm{log}}(p,\pa)\) exists, then the common power exponent is
\[
 a:=d\,\MI_{\mathrm{pow}}^{-1}(p,\pa)
 =d\,\MI_{\mathrm{pow}}^{-1}(q,\pa).
\]
Since \(m_q(r)\asymp m_p(r)\), for every \(\eta\in\mathbb R\),
\[
 m_p(r)=O\!\left(r^a(-\log r)^\eta\right)
 \quad\Longleftrightarrow\quad
 m_q(r)=O\!\left(r^a(-\log r)^\eta\right),
\]
and similarly
\[
 m_p(r)=\Omega\!\left(r^a(-\log r)^\eta\right)
 \quad\Longleftrightarrow\quad
 m_q(r)=\Omega\!\left(r^a(-\log r)^\eta\right).
\]
Hence the logarithmic upper and lower gauge classes also coincide. Since the
Logarithmic Mass Index of \(p\) exists, and the common gauge classes force the
same upper and lower logarithmic thresholds for \(q\),
\[
 \MI_{\mathrm{log}}(q,\pa)=\MI_{\mathrm{log}}(p,\pa).
\]

We now prove \textnormal{(ii)}. Suppose
\[
 \limsup_{r\downarrow0}\overline D_\alpha^{E_r}(p\|q)<\infty .
\]
Then there exists \(M<\infty\) such that
\[
 \overline D_\alpha^{E_r}(p\|q)\le M
\]
for all sufficiently small \(r\). Applying Theorem~\ref{thm:local multiplicative} with \(p\) and \(q\)
interchanged gives
\[
 f_\alpha\!\left(\frac{m_p(r)}{m_q(r)}\right)
 \le
 \overline D_\alpha^{E_r}(p\|q)
 \le M .
\]
By the second scalar fact, \(m_p(r)/m_q(r)\) is bounded above. Therefore there
exists \(C<\infty\) such that
\[
 m_q(r)\ge C^{-1}m_p(r)
\]
for all sufficiently small \(r\).

Consequently, every lower power bound for \(m_p\) is also a lower power bound
for \(m_q\):
\[
 \mathcal L_{\mathrm{pow}}(p,\pa)
 \subseteq
 \mathcal L_{\mathrm{pow}}(q,\pa).
\]
Taking suprema and using the dimensional normalisation in Definition~\ref{define:MI} gives
\[
 \sup\mathcal L_{\mathrm{pow}}(q,\pa)
 \ge
 \sup\mathcal L_{\mathrm{pow}}(p,\pa).
\]
Assumption~\ref{as:existence} identifies these lower thresholds with the corresponding Power Mass
Indices, because \(\sup\mathcal L_{\mathrm{pow}}=\inf\mathcal U_{\mathrm{pow}}\)
for both measures. Therefore
\[
 \MI_{\mathrm{pow}}(q,\pa)
 \ge
 \MI_{\mathrm{pow}}(p,\pa).
\]

Finally, assume
\[
 \MI_{\mathrm{pow}}(q,\pa)=\MI_{\mathrm{pow}}(p,\pa)
\]
and that both Logarithmic Mass Indices exist. Let
\[
 a:=d\,\MI_{\mathrm{pow}}^{-1}(p,\pa)
 =d\,\MI_{\mathrm{pow}}^{-1}(q,\pa).
\]
The inequality \(m_q(r)\ge C^{-1}m_p(r)\) implies that every lower logarithmic
bound for \(m_p\) at the common power scale is also a lower logarithmic bound
for \(m_q\). Hence
\[
 \mathcal L_{\mathrm{log}}(p,\pa)
 \subseteq
 \mathcal L_{\mathrm{log}}(q,\pa).
\]
Taking suprema and using the existence of both Logarithmic Mass Indices gives
\[
 \MI_{\mathrm{log}}(q,\pa)
 \ge
 \MI_{\mathrm{log}}(p,\pa).
\]
The proof is complete.
\end{proof}

\section{Additional experimental details}
\label{app:experiment_details}

This appendix gives the concrete settings used for the experiments in Section~\ref{sec:experiments}. All experiments are finite-radius illustrations. We do not estimate asymptotic Mass Indices directly.

\subsection{Experiment 1: synthetic calibration}
\label{app:synthetic_calibration}

Experiment~1 plots closed-form small-ball curves over
\[
r\in[10^{-3},0.6].
\]
The purpose is to visualize representative local regimes rather than to fit a statistical model.

For the regular reference case, we use the one-dimensional Gaussian small-ball mass
\[
p_{\mathrm{Gauss}}(B_r(0))
=
\operatorname{erf}\left(\frac{r}{\sqrt{2}}\right).
\]
This behaves as \(r\) for small \(r\).

For the local-depletion example, we plot
\[
p_{\mathrm{dep}}(B_r(0))=r^2.
\]
This has faster small-radius decay than the regular reference.

For the cusp example, we plot
\[
p_{\mathrm{cusp}}(B_r(0))=\sqrt r.
\]
This has slower small-radius decay than the regular reference.

For the logarithmically corrected example, we plot
\[
p_{\log}(B_r(0))=r(1-\log r).
\]
This has the same leading power order as the regular reference, but differs by a logarithmic factor.

For the spike-slab example, we use spike weight \(0.25\) and slab scale \(0.35\), giving
\[
p_{\mathrm{spike}}(B_r(0))
=
0.25
+
0.75\,
\operatorname{erf}\left(
\frac{r}{0.35\sqrt{2}}
\right).
\]
The nonzero spike produces a small-ball mass that does not vanish as \(r\to0\).

\subsection{Experiment 2: UCI Bayesian logistic-regression sanity check}
\label{app:uci_experiment}

Experiment~2 uses three binary classification datasets: Breast Cancer, Iris \(0\) vs \(1\), and Wine \(0\) vs \(1\). For Iris and Wine, only classes \(0\) and \(1\) are retained. Each dataset is repeated over five random seeds,
\[
\{0,1,2,3,4\}.
\]
The seed controls the stratified train-test split. Conditional on the training set, preprocessing, Laplace optimisation, and quadrature settings are fixed.
For each dataset and seed, we use a stratified train-test split with test fraction \(0.30\). Only the training set is used for the Bayesian logistic-regression fit.

The covariates are standardized on the training set and then projected to four principal components using PCA. An intercept is added, so the parameter dimension is
\[
d=5.
\]
The model is Bayesian logistic regression. Given design matrix \(X\), labels \(y_i\in\{0,1\}\), and parameter \(\pa\in\mathbb R^5\), the negative log posterior objective is
\[
\sum_i
\left[
\log(1+\exp(x_i^\top\pa))
-
y_i x_i^\top\pa
\right]
+
\frac{1}{2\sigma_0^2}\|\pa\|^2,
\qquad
\sigma_0=2.
\]
Equivalently, the prior is
\[
\pa\sim N(0,4I_5).
\]

The posterior is approximated by a Laplace Gaussian approximation. The posterior mean \(\hat\pa\) is obtained by minimizing the negative log posterior. We use a trust-region Newton optimizer with analytic gradient and Hessian. If this optimizer fails, we use an L-BFGS-B fallback. The Laplace covariance is the inverse Hessian at \(\hat\pa\), with eigenvalues clipped below \(10^{-10}\) for numerical stability. We set
\[
\pa_0=\hat\pa.
\]

For each run, prior and posterior Euclidean small-ball masses are estimated for
\[
B_r(\pa_0)
=
\{\pa:\|\pa-\pa_0\|_2\leq r\}
\]
over a geometric grid of \(24\) radii in
\[
[0.03,0.45].
\]
The prior covariance is \(4I_5\), while the posterior covariance is the Laplace covariance. The prior mean is \(0\), and the posterior mean is \(\pa_0\).

The small-ball masses are estimated by Sobol quadrature over the Euclidean unit ball. The default number of Sobol points is
\[
2^{15}=32768
\]
per run. The unit-ball Sobol points are generated in dimension \(d+1\). The first \(d\) coordinates are transformed into Gaussian directions and normalized to obtain directions on the sphere. The final coordinate is transformed as \(U^{1/d}\) to obtain the radial component. For a radius \(r\), the quadrature points are
\[
\pa_0+r z_j,
\qquad
z_j\in B_1(0).
\]

For a Gaussian distribution \(N(m,\Sigma)\), the estimated log small-ball mass is
\[
\log |B_1(0)|
+
d\log r
+
\log\left[
\frac{1}{M}
\sum_{j=1}^{M}
\varphi_{\Sigma}(\pa_0+r z_j-m)
\right],
\]
where \(M=32768\), and \(\varphi_{\Sigma}\) denotes the Gaussian density with covariance \(\Sigma\).

Finite-radius slopes are computed by first differences on the log-log scale:
\[
s_k
=
\frac{
\log p(B_{r_{k+1}}(\pa_0))
-
\log p(B_{r_k}(\pa_0))
}{
\log r_{k+1}-\log r_k
}.
\]
The slope location is reported at the geometric midpoint
\[
(r_k r_{k+1})^{1/2}.
\]
The main-text table reports small-radius slopes fitted over \(r\leq0.06\).

Across datasets and seeds, the plotted aggregate curves use means and standard errors over all runs. In total, the aggregate UCI figure uses
\[
3\times5=15
\]
runs.

\subsection{Experiment 3: local RE-KL directionality}
\label{app:rekl_directionality}

Experiment~3 is a one-dimensional toy example designed to show that local RE-KL control is directional. On \((-1,1)\), define
\[
p(x)=\frac12,
\qquad
q(x)=\frac14 |x|^{-1/2}.
\]
Both are probability densities on \((-1,1)\). Around \(\pa_0=0\), their small-ball masses are
\[
p(B_r(0))=r,
\qquad
q(B_r(0))=\sqrt r.
\]

The plotted local RE-KL quantities use mass-ratio weighted conditional terms. The conditional constants are
\[
C_{p\|q}=\log 2-\frac12,
\qquad
C_{q\|p}=1-\log 2.
\]
The plotted finite-radius quantities are
\[
D_r(p\|q)
=
\sqrt r
\left(\log 2-\frac12\right),
\]
and
\[
D_r(q\|p)
=
\frac{1-\log 2}{\sqrt r}.
\]
Therefore,
\[
D_r(p\|q)\to0,
\qquad
D_r(q\|p)\to\infty
\qquad
(r\to0).
\]
This example shows that the two directions can behave differently on the same shrinking neighbourhoods. Hence the direction in a local RE-KL assumption is substantive.

\end{document}